\documentclass{article}

\PassOptionsToPackage{sort&compress, numbers}{natbib}
\usepackage[preprint]{neurips_2026}

\usepackage[utf8]{inputenc} 
\usepackage[T1]{fontenc}    
\usepackage{hyperref}       
\usepackage{url}            
\usepackage{booktabs}       
\usepackage{amsfonts}       
\usepackage{nicefrac}       
\usepackage{microtype}      
\usepackage{xcolor}         
\usepackage{amsmath,amssymb}
\usepackage[nolist]{acronym}
\usepackage{mathtools}
\usepackage{subcaption}
\usepackage{graphicx}
\usepackage[disable]{todonotes}
\usepackage{bm}
\usepackage{amsmath,amssymb,amsthm}
\newtheorem{theorem}{Theorem}

\usepackage{listings}
\newcommand{\G}{\ensuremath{\mathcal{G}}}

\newcommand{\ii}{\ensuremath{\mathrm{i}}}
\newcommand{\so}{\ensuremath{\mathrm{SO}(2)}}
\newcommand{\se}{\ensuremath{\mathrm{SE}(2)}}
\newcommand{\oo}{\ensuremath{\mathrm{O}(2)}}

\DeclareMathOperator{\BigO}{O}

\title{On Making $\bm{\mathrm{SE}(2)}$-Invariant Networks Optimal}

\author{%
  Tomas Karella\thanks{Corresponding author:
    \texttt{karella@lanl.gov}} \\
  CAI-3: Information Sciences \\
  Los Alamos National Laboratory
  \And
  Emily Shinkle \\
  CAI-3: Information Sciences \\
  Los Alamos National Laboratory
  \AND
  Alice Elisabeth Anastasia Allen\\
  Machine Learning in Molecular Dynamics\\ 
  Max Planck Institute for Polymer Research 
  \And 
  Pieter Swart \\
  T-5: Applied Mathematics 
  and Plasma Physics \\
  Los Alamos National Laboratory
  \And
  Nicholas Lubbers \\
  CAI-3: Information Sciences \\
  Los Alamos National Laboratory
  \And
  Roxana Bujack \\
  CAI-3: Information Sciences \\
  Los Alamos National Laboratory
}
\begin{acronym}
\acro{CNN}{convolutional neural network}
\acro{ViT}{Vision Transformer}
\acro{DoF}{degrees of freedom}
\acro{RCI}{rotation consistency index}
\end{acronym}

\begin{document}

\maketitle

\todo[]{Remove flips,  S from SE, SO}

\begin{abstract}
Respecting a-priori known symmetries of underlying data is a key principle in designing efficient neural architectures. Convolutional neural networks are a canonical example, encoding translation equivariance. Extending this inductive bias to richer symmetry groups such as $\mathrm{SE}(2)$ in the most effective way possible is an ongoing architectural and computational challenge. Building on classical invariant theory, we derive invariant feature maps that are provably optimal: minimal and information-preserving with respect to the symmetry group. Our construction is designed to integrate seamlessly into standard architectures, requiring no modification to nonlinearities or normalization layers. As a concrete implementation, we introduce a plug-and-play module for convolutional neural networks that enforces $\mathrm{SE}(2)$ invariance, capturing both translations and rotations within standard architectures. The resulting models are both efficient and expressive. Experiments show competitive or improved accuracy compared to state-of-the-art equivariant methods at significantly lower computational cost. Finally, we provide a theoretical and empirical analysis of the relationship between equivariance and invariance, offering insight into their roles in controlling robustness and flexibility.
\end{abstract}

\Acp{CNN}~\citep{lecun1989} remain a fundamental building block in computer vision, largely due to their strong inductive biases~\citep{prince2023}. In particular, translation equivariance enables consistent responses to spatial shifts and has proven crucial in data-limited regimes~\cite{bronstein2021}. More generally, incorporating symmetry priors into neural architectures has emerged as a principled strategy for improving sample efficiency and robustness, especially in scientific and engineering applications where underlying symmetries are well understood~\citep{bronstein2021, brehmer2025}. This has motivated a growing body of work extending equivariance beyond translations to other groups, such as rotations and reflections, to capture richer geometric structure in underlying data~\citep{cohen2016, kondor2018, worrall2019, fuchs2020, brandstetter2022, musaelian2023, worrall2017, equiformerv3, equiformerv2, equiformerv1, e2cnn}.

Many approaches to rotation equivariance construct feature spaces that transform according to irreducible representations of the underlying symmetry group~\citep{cesa2022, e2cnn, equiformerv1, worrall2017, weiler2018}. While this provides a principled framework, it introduces significant computational and architectural overhead. In particular, operations on such structured features often scale quadratically with the number of basis functions used to parameterize the feature space~\citep{cesa2022, e2cnn, worrall2017}, leading to increased complexity. Furthermore, standard deep learning components, such as convolutions, nonlinearities, and normalization layers, cannot be applied directly and must be redesigned to operate on these structured feature spaces. This complicates implementation and limits compatibility with existing architectures. Consequently, there remains a need for approaches that retain the benefits of symmetry-aware representations while enabling efficient~\citep{equiformerv3, passaro2023} and plug-and-play integration into standard neural network pipelines~\citep{kaba2023}.

An alternative framework for handling symmetry comes from classical pattern recognition~\citep{flusser2009, pratt2013, gonzalez2009}. In this approach, invariance is achieved by constructing features that are invariant to the transformation group~\citep{flusser2009}, avoiding the need for structured intermediate descriptors and instead operating directly in the invariant space. Early work~\citep{wood1996} highlighted that invariance alone is not sufficient: features must remain \textbf{discriminative}, i.e., preserve the ability to distinguish patterns up to the group action, avoid redundancy, and remain computationally tractable. Further, Flusser constructed invariant features that are both independent and discriminative on subsets of the input space where the group acts freely (e.g., under conditions such as $m_{1,0} \neq 0$)~\citep{flusser2000, flusser2002}. More recent work generalizes this result by constructing discriminative invariant maps on broader subsets of the input space~\citep{bujack2022}.

Our contribution is fourfold. First, we derive an \textbf{optimal invariant map} grounded in classical invariant theory and prove that it is both \textbf{discriminative}---preserving all information up to the group action---and \textbf{minimal}. Second, we translate this theory directly to an efficient implementation as a \textbf{plug-and-play module} yielding $\mathrm{SE}(2)$-invariant \acp{CNN} requiring no modification to standard components and yielding models that are both computationally efficient and expressive. Third, we show that these models achieve \textbf{competitive or improved performance} compared to equivariant approaches at lower computational cost on a variety of image datasets. Fourth, we provide a \textbf{theoretical and empirical analysis} of the relationship between equivariance and invariance, offering insight in balancing robustness and flexibility.

\section{Background: Discriminative and Minimal Invariant Maps}
We first review the invariant-theoretic framework underlying our approach. We begin by describing moments as projections onto basis functions with well-defined transformation properties, and then show how such so-called equivariant features can be systematically combined into invariants. This formulation allows for the development of feature maps that are both discriminative and minimal, forming the basis for the neural modules introduced in this work.

\subsection{Complex Moments and Irreducible Representations of $\texorpdfstring{\bm{\mathrm{SO}}\bm{(2)}}{SO(2)}$}
In pattern recognition, the goal is to distinguish between patterns $p \in \mathcal{P}$ based on observed data, where pattern $p$ is a function defined on compact support $p :\Omega \subset \mathbb{R}^d \to \mathbb{K}$. When symmetries are present, they can be modeled by a group $\mathcal{G}$ acting on $\mathcal{P}$, which defines how patterns \(p\) behave under transformation. Two fundamental behaviors arise.

A function $\hat{\phi}: \mathcal{P} \rightarrow \mathbb{K}$ is \textbf{equivariant} if it transforms consistently with the group action. 
Typically, the action of $\mathcal{G}$ is realized through representations, so that each $g \in \mathcal{G}$ is associated with a matrix $\rho$ acting on the corresponding space. The group actions on the input and output spaces may be given by different representations, and equivariance is expressed as
\begin{equation}
\label{def:equivariance}
\hat{\phi}(\rho_{\mathrm{in}}(g)\, p)
=
\rho_{\mathrm{out}}(g)\, \hat{\phi}(p),
\quad \forall g \in \G, \, p \in \mathcal{P}.
\end{equation}
and \textbf{invariant} if it remains unchanged under the group action, i.e.
$\phi(\rho_{\mathrm{in}}(g)\, p) = \phi(p)$.

To construct informative features of patterns, it is common to project signals onto a set of basis functions with well-defined $\G$ transformation properties. In the context of group symmetries, such bases are often chosen to align with irreducible representations~\citep{weiler2023} of the underlying group, as formalized by results such as the Peter-Weyl Theorem~\citep{Peter1927} for compact groups. Classically, these projections are referred to as \textbf{moments}~\citep{flusser2009}.

Let \(p : \Omega \subset \mathbb{R}^d \to \mathbb{K}\) be a pattern with compact support, and let \(\psi_{k,l} : \Omega \to \mathbb{K}\) be a basis function. The \textbf{moment} of \(p\) associated with \(\psi_{k,l}\) is defined as
\begin{equation}
\label{def:moment}
m[\psi_{k,l}](p) := \int_{\Omega} \psi_{k,l}(x)\, p(x)\, dx .
\end{equation}
Moments are defined with respect to a chosen family of basis functions. For rotational symmetries, widely used basis functions---such as complex or Zernike polynomials~\citep{flusser2009}, or equivariant steerable bases~\citep{weiler2018, cesa2022, e2cnn}---share a common structure: they decompose into radial and angular components, differing in the radial term. We therefore consider a general family of basis functions of the form 
\begin{equation}
\label{def:polynomial}
\psi_{k,l}(r,\theta) = R_{k,l}(r)\, e^{\ii k \theta},
\end{equation}
where \((r,\theta)\) are polar coordinates, \(R_{k,l}\) is a radial function, \(k \in \mathbb{Z}\) denotes the angular frequency, and \(l\) indexes the radial component. Since \(e^{\ii k \theta}\) transforms according to an irreducible representation of \(\mathrm{SO}(2)\), each basis function \(\psi_{k,l}\) induces a moment \(m[\psi_{k,l}]\) with a well-defined type, also called spin, weight, character, or eigenvalue, given by \(k\). For notational simplicity, we write \(m_{k,l} := m[\psi_{k,l}]\). When the radial index is fixed or irrelevant, we simply write \(m_k\). Formally, for a rotation \(g_\theta \in \mathrm{SO}(2)\) by angle \(\theta\), a moment of type \(k\) satisfies
\begin{equation}
\label{eq:type}
m_k(g_\theta \odot p) \overset{\eqref{def:moment}\eqref{def:polynomial}}{=} e^{\ii k \theta} \, m_k(p).
\end{equation}
\textbf{Rotation invariants}~\citep{flusser2009} can be constructed by combining moments in such a way that their transformation phases cancel. In particular, consider a product of moments
\begin{equation}
\label{eq:invariant}
\phi(p) := \prod_{j=1}^n m_{k_j}(p)^{a_j}, \qquad \text{subject to} \sum_{j=1}^n k_j a_j = 0 .
\end{equation}
then $\phi$ is invariant under rotation $g_\theta$. Indeed, using \eqref{eq:type} and \eqref{eq:invariant} and rotation $g_\theta$, we have 
\begin{equation*}
\phi(g_\theta \odot p)
\overset{\eqref{eq:type}\eqref{eq:invariant}}{=}
\prod_{j=1}^n \left(e^{\ii k_j \theta} m_{k_j}(p)\right)^{a_j} 
e^{\ii \theta \sum_{j=1}^n k_j a_j} \, \phi(p) \overset{\eqref{eq:invariant}}{=} \phi(p).
\end{equation*}

\subsection{Flusser Basis as a Minimal and Discriminative Invariant Map}
Equation~\ref{eq:invariant} allows for an infinite number of invariants that emerge as a combinatorial explosion. Early approaches, such as Hu’s\citep{hu1962} moment invariants, proposed specific targeted subsets of such combinations, but these were proven redundant and did not capture all relevant information. This limitation was addressed by Flusser~\citep{flusser2002}, who proposed a systematic construction of non-redundant invariant representations, known as the \textbf{Flusser basis}. The key idea is to normalize moments of arbitrary type \(k\) by a reference moment of type \(1\), thereby canceling the rotational phase. This yields the invariant map
\begin{equation}
\label{eq:flusser_basis}
\Phi_F(p) :=
\left\{
m_{k,l}(p)\, m_{1,0}(p)^{-k}
\;\middle|\;
1 \leq k \leq N_k,\;
1 \leq l \leq N_l,\;
m_{1,0}(p) \neq 0
\right\},
\end{equation}
where \(m_{1,0}\) is a non-zero reference moment of type \(1\) and \(N_k, N_l \in \mathbb{N}^+\) control the maximum angular and radial frequencies. This construction yields a non-redundant set of invariants and is discriminative up to rotation. It can be interpreted as a canonical phase normalization of moments. Our formulation generalizes classical constructions: complex moments~\citep{flusser2000} are typically indexed as \(m_{pq}\) with type \(k = p - q\), which corresponds to our notation \(m_{k,l}\) up to the choice of radial functions.

The resulting Flusser basis has dimension \(\dim(\Phi_F) = \dim(\mathcal{M}) - \dim(G)\) (cf.\ Appendix~\ref{appendix:dimensions}), indicating that it captures all degrees of freedom of the signal up to the group action. Here, \(\mathcal{M}\) denotes the manifold induced by the moments.\footnote{In cases where $(m_1 = 0)$, the flexible invariant basis~\citep{bujack2017} can be employed; see Appendix~\ref{appendix:flexible} for details.} 

To formalize properties of \textbf{optimal bases} in our setting, we revisit and adapt the notions of minimality and discriminability from classical pattern recognition~\citep{wood1996}. Let \(\Phi : \mathcal{P} \to \mathbb{K}^n\) denote a map of $n$ invariant features, with components \(\phi_j : \mathcal{P} \to \mathbb{K}\). In the context of deep learning, where networks can approximate arbitrary functions~\citep{hornik1989}, invariant maps should be non-redundant, i.e., minimal, meaning that the features do not admit functional dependencies.

An invariant map \(\Phi\) is \textbf{discriminative} if, for all \(p_1, p_2 \in \mathcal{P}\),
\begin{equation}
\Phi(p_1) = \Phi(p_2) \quad \Rightarrow \quad \exists\, g \in \mathcal{G} \text{ such that } p_2 = g \odot p_1.
\end{equation}

An invariant map $\Phi = (\phi_1,\dots,\phi_m)$ is called \textbf{minimal} if it is discriminative and no proper subcollection is discriminative, i.e.,
\begin{equation}
\forall\, \Psi \subsetneq \Phi,\quad \Psi \text{ is not discriminative.}
\end{equation}
Equivalently, no component $\phi_i$ can be removed without losing discriminability. Functional independence is sufficient for minimality~\citep{flusser2000}, while discriminability for patterns with $m_{1,0} \neq 0$ follows from completeness and the solution of the inverse problem~\citep{flusser2002}.

\section{Building $\texorpdfstring{\bm{\mathrm{SE}}\bm{(2)}}{SE(2)}$-Invariant Networks with Moment Invariants}
Having established the invariant theory, we now introduce our method. We construct $\se$-invariant \acp{CNN} using optimal bases derived from moment invariants. We consider a setting in which $\se$ acts only on the spatial domain, while feature values remain invariant. In contrast to classical approaches, where invariants are computed from a single pattern $p$ (e.g., at the center of mass), we evaluate invariants at every spatial location of the input feature space $f : \Omega \subset \mathbb{R}^2 \to \mathbb{R}$. Accordingly, each layer of the network $\hat{\Phi}$ is required to satisfy the equivariance constraint on the spatial domain, while the output feature values themselves remain invariant under rotations ($\rho_\textrm{out}=\mathbb{I}$):
\begin{equation}
\label{eq:se2_equivariance}
\hat{\Phi}(g \odot f)(x) = \hat{\Phi}(f)(g^{-1} x), \quad g \in \se,
\end{equation}
where $g = (g_t, g_\theta) \in \se$ denote translation and rotation, and $\odot$ is the action of $\mathrm{SE}(2)$ on the input signal.

Many standard operations used in deep \acp{CNN} already satisfy Eq.~\eqref{eq:se2_equivariance} (up to discretizations errors), including point-wise nonlinearities and normalization layers, as they act independently at each pixel. In the context of \acp{CNN}, standard convolutional layers are not equivariant under rotations and therefore must be redesigned to preserve the desired transformation behavior under $\se$.

\subsection{Moment Layer: Projection to a basis}
The proposed moment projection layer can be viewed as a form of steerable convolution~\citep{cesa2022}. In steerable \acp{CNN}~\citep{worrall2017,e2cnn,cesa2022,weiler2018}, equivariance is enforced by constraining convolutional kernels to transform according to group representations. Our construction follows the same principle: we project the input onto a moment basis at each spatial location. This maps local signals into a moment space whose components transform according to their type. Concretely, each moment $m_k$ satisfies
\begin{equation}
\label{eq:steerable_field}
m_k(g \odot f)(x) \overset{\eqref{eq:type}}{=} e^{\ii k \theta} m_k(f)(g^{-1} x), \quad g = (g_t, g_\theta) \in \se,
\end{equation}
so that the feature map $(m_{k_1}, \dots, m_{k_n})$ forms a steerable field transforming under the direct sum of irreducible representations, that is, the combined representation is given by $\rho(g_\theta) = \bigoplus_j e^{\ii k_j \theta}$.

Steerable feature fields provide a principled way to achieve equivariance and have been studied in prior work through specialized architectures that preserve the transformation law in Eq.~\eqref{eq:steerable_field}~\citep{weiler2023}. In these approaches, all layers must be adapted accordingly, and convolutional layers explicitly model interactions between all input and output types. As implemented in frameworks such as escnn~\citep{e2cnn, cesa2022}, steerable convolutions incur a computational scaling of $\BigO(N_{\text{in}}N_{\text{out}})$ due to these type-to-type interactions.

In contrast, our construction avoids explicit type-to-type interactions by directly projecting all features back into type $0$, the invariant type. Each block consists of (i) a moment projection layer and (ii) an invariant layer that combines them into a complete and non-redundant set of invariant features over the quotient space $\mathcal{M}/G$. Our block scales linearly with the number of moments.

\paragraph{Implementation.} The moment projection is implemented as a convolution, computing moments at each spatial location. The kernels $\psi_{k,l}(r,\theta)$ form a separable basis in polar coordinates (Eq.~\eqref{def:polynomial}), with angular frequency $k$ and radial components $R_{k,l}$. In practice, $R_{k,l}$ can be fixed or learned; following prior work~\citep{cesa2022,e2cnn,worrall2017}, we use a B-spline parameterization. Similar to prior symmetry-aware architectures~\citep{e2cnn,mnist_rot_test}, we also employ larger \ac{CNN} kernels than in standard \acp{CNN} to mitigate discretizations error. Additional details on radial parameterizations and kernel sizes are provided in Appendix~\ref{appendix:radial}. For stability, moment features are normalized using a shared layer normalization across types, similarly to Equiformer~\citep{equiformerv3}. We observe that performance is not sensitive to the specific normalization choice.

\subsection{Optimal Bases: Preserving All Information from Moments}
A natural approach to constructing invariant features is to directly apply classical moment invariants, such as the Flusser basis defined in Eq.~\eqref{eq:flusser_basis}, which combine moments into type-$0$ quantities. In our setting, these constructions are applied locally at each spatial location. Concretely, one can define pointwise invariants of the form
\begin{equation}
\label{eq:flusser_conv}
\phi_k(f)(x) \overset{\eqref{eq:flusser_basis}}{=} m_k(f)(x) \left[m_1(f)(x)\right]^{-k},
\end{equation}
which cancels the phase and yields a type-$0$ quantity. For simplicity, we omit the explicit dependence on $f$ and $x$ when it is clear from context, and write $m_k$ and $\phi_k$ instead of $m_{k,l}(f)(x)$ and $\phi_k(f)(x)$.

Such formulations involve several complexities that need to be addressed for a practical implementation. First, they involve high-order polynomial combinations of moments, resulting in rapidly growing magnitudes and numerical instabilities. They also require explicit manipulation of complex-valued quantities, leading to inefficiency. Our concrete construction handles these difficulties.

\paragraph{Optimal Bases.}
We leverage the transformation law in Eq.~\eqref{eq:type} in two ways. (i) The \textbf{magnitudes $\boldsymbol{|m_k|}$ are rotation-invariant} and therefore provide type-$0$ features directly. (ii) Since the phase of each moment changes linearly with its type, relative phases can be normalized using a non-zero reference moment $m_1$, analogous to classical invariant constructions in Eq.~\eqref{eq:flusser_basis}. The resulting \textbf{phase differences can be represented via cosine or sine functions}. Together, these features capture the invariant degrees of freedom and are sufficient to reconstruct classical moment invariants (Appendix~\ref{appedix:proof}). We define the invariant feature basis $\Phi_{\mathcal{O}}$ as
\begin{equation}
\Phi_{\mathcal{O}}
=
\left\{
|m_1|,\;
|m_k|,\;
\phi_{\mathcal{O}}(m_1, m_k)
\;\middle|\;
k = 0,2 \dots N_k \
\right\},
\end{equation}
where the magnitude terms provide rotation-invariant amplitudes and $\phi_{\mathcal{O}}(m_1, m_k)$ encodes rotation-invariant relative phase between moments. 

To make the phase construction explicit, we express moments in polar form as $m_k = r_k e^{i\theta_k}$. As mentioned, direct constructions such as in Eq.~\eqref{eq:flusser_conv} introduce powers of magnitudes, leading to value explosion in deep layers. Although magnitude information is already captured, the phase alone cannot be used directly, as phases of low-magnitude moments are unstable: the phase becomes undefined as the magnitude approaches zero. We therefore construct a single real-valued invariant per pair $(m_1, m_k)$ by taking the real part of the phase-aligned interaction and modulating it by a magnitude-dependent function:
\begin{equation}
\label{eq:our-basis}
\phi_{\mathcal{O}}(m_1, m_k)
=
\sigma(|m_1|, |m_k|)\,
\cos\!\big(\theta_k - k\theta_1\big).\footnote{This follows from complex multiplication in polar form, $c_1 c_2 = r_1 r_2 (\cos(\theta_1+\theta_2) + i\sin(\theta_1+\theta_2))$, and de Moivre's formula~\citep{moivre1707}, $c^w = r^w (\cos(w\theta) + i\sin(w\theta))$. We retain only the real part.}
\end{equation}
This suppresses unstable phase contributions near zero-magnitude patterns while preventing large polynomial scaling with the magnitude of the pattern.

\begin{figure}
    \centering
    \caption{Overview of the proposed invariant convolutional architecture. (a) The overall network architecture obtained by stacking the proposed blocks. (b) The invariant convolution block, combining a moment layer and an invariant layer within a residual structure.   (c) Illustration of the invariant layer, showing how moment responses are combined to produce rotation-invariant features.}
    \includegraphics[width=0.85\linewidth]{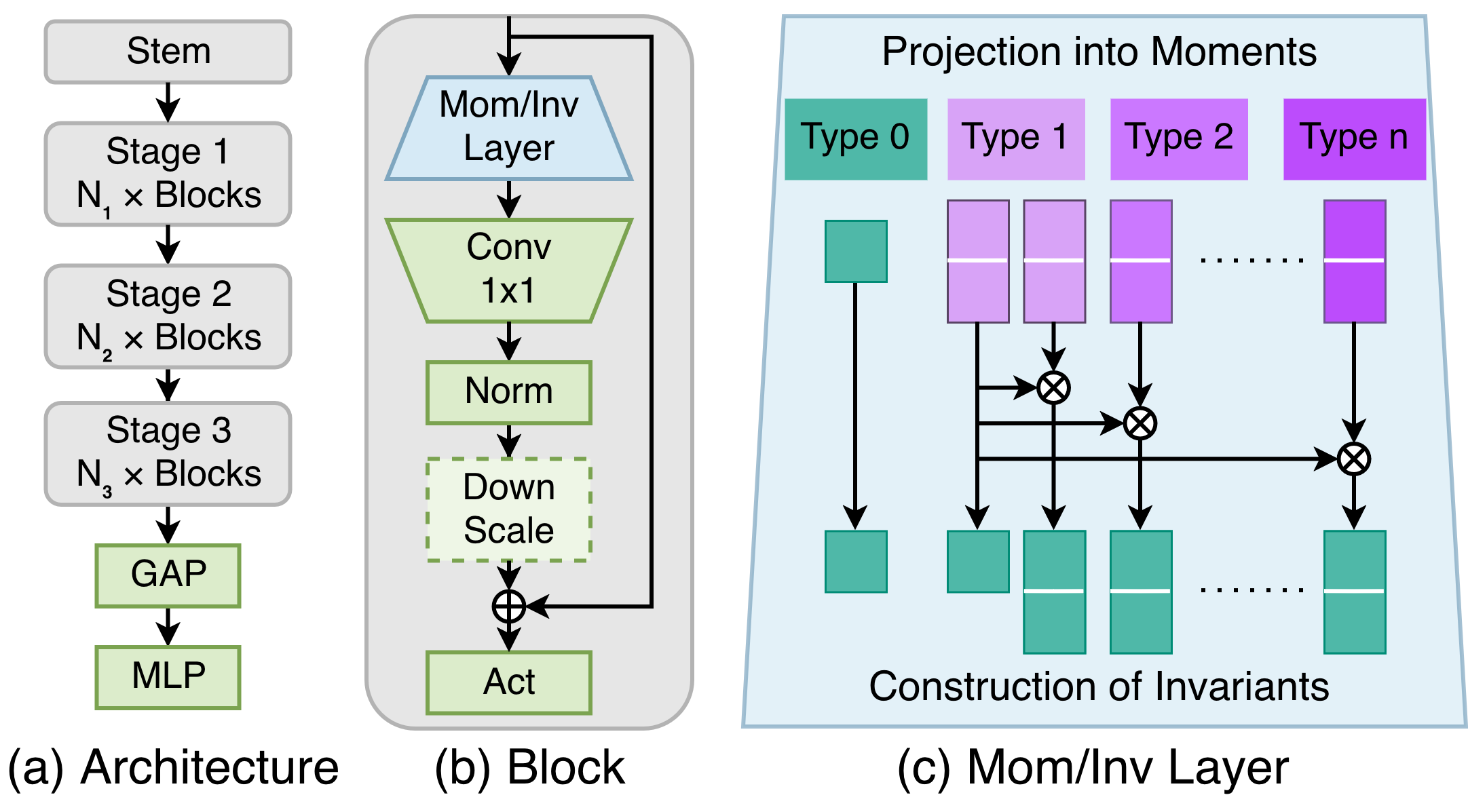}
    \label{fig:architecture}
    \vspace{-.1in}
\end{figure}

\subsection{Plug-and-Play Invariant Moment Layer}
The overall architecture of the \ac{CNN} block is depicted in Fig.~\ref{fig:architecture}. The moment projection is followed by an invariant layer that maps moment features to type-$0$ representations according to Eq.~\eqref{eq:our-basis}, combining moment magnitudes and relative phase differences with respect to a reference moment illustrated by Fig.~\ref{fig:architecture}(c). Together, these components form an invariant convolutional (Mom/Inv) layer that can be integrated into standard \ac{CNN} architectures.

The output invariant features can be aggregated in a variety of ways. In our implementation, we follow the design illustrated in Fig.~\ref{fig:architecture}(b), using a $1\times1$ convolution followed by normalization and a nonlinearity in a manner similar to MBConv blocks~\citep{tan2021}. The proposed blocks can be stacked to form standard \ac{CNN} architectures; in particular, we adopt a stage-wise design, where multiple blocks are grouped into stages operating at decreasing spatial resolutions and increasing channel dimensions, as illustrated in Fig.~\ref{fig:architecture}(a).

We compared the Mom/Inv layer design with alternative invariant maps, including Flusser and the flexible basis (Appendix~\ref{appendix:flexible}). While these approaches are theoretically equivalent in terms of discriminability, the proposed basis proved more numerically stable and computationally efficient in practice. Additional nonlinearities after the invariant projection did not yield consistent improvements, indicating that the construction already provides sufficient nonlinearity, similar to tensor product interactions in equivariant networks such as e3nn~\citep{equiformerv1}. We also considered normalization before or after the invariant layer; as both performed similarly, we apply it only after moment projection.

\section{Experiments}
We evaluate our method on benchmarks where rotational symmetries are intrinsic to the data, including satellite imagery, histology, material science, and textures (Fig.~\ref{fig:datasets}). In these domains, object orientation is arbitrary or physically irrelevant, providing a natural setting to assess whether rotational invariance improves generalization and data efficiency, particularly in low-data and capacity-constrained regimes. To further study this setting, we construct a material science benchmark based on SEM microscopy images, representing a distinct modality with limited data.

\paragraph{Metrics.}
In addition to classification accuracy, we measure the \ac{RCI} by measuring the agreement between predictions for rotated versions of the same input. Given logits for an upright input and its rotated counterpart, we compute the $\ell_2$ distance:
\begin{equation}
\mathrm{RCI} = \| \mathbf{z}_{\text{rot}} - \mathbf{z}_{\text{upright}} \|_2.
\end{equation}
Lower $\mathrm{RCI}$ indicates greater consistency under rotation. We found this metric to be particularly informative when comparing invariant and equivariant models. We also considered Lie derivative equivariance metrics~\citep{gruver2023}, but found them less sensitive for comparing models within the class of equivariant architectures.

\begin{figure}
\centering
\begin{minipage}{0.48\textwidth}
  \centering
  \includegraphics[width=\linewidth]{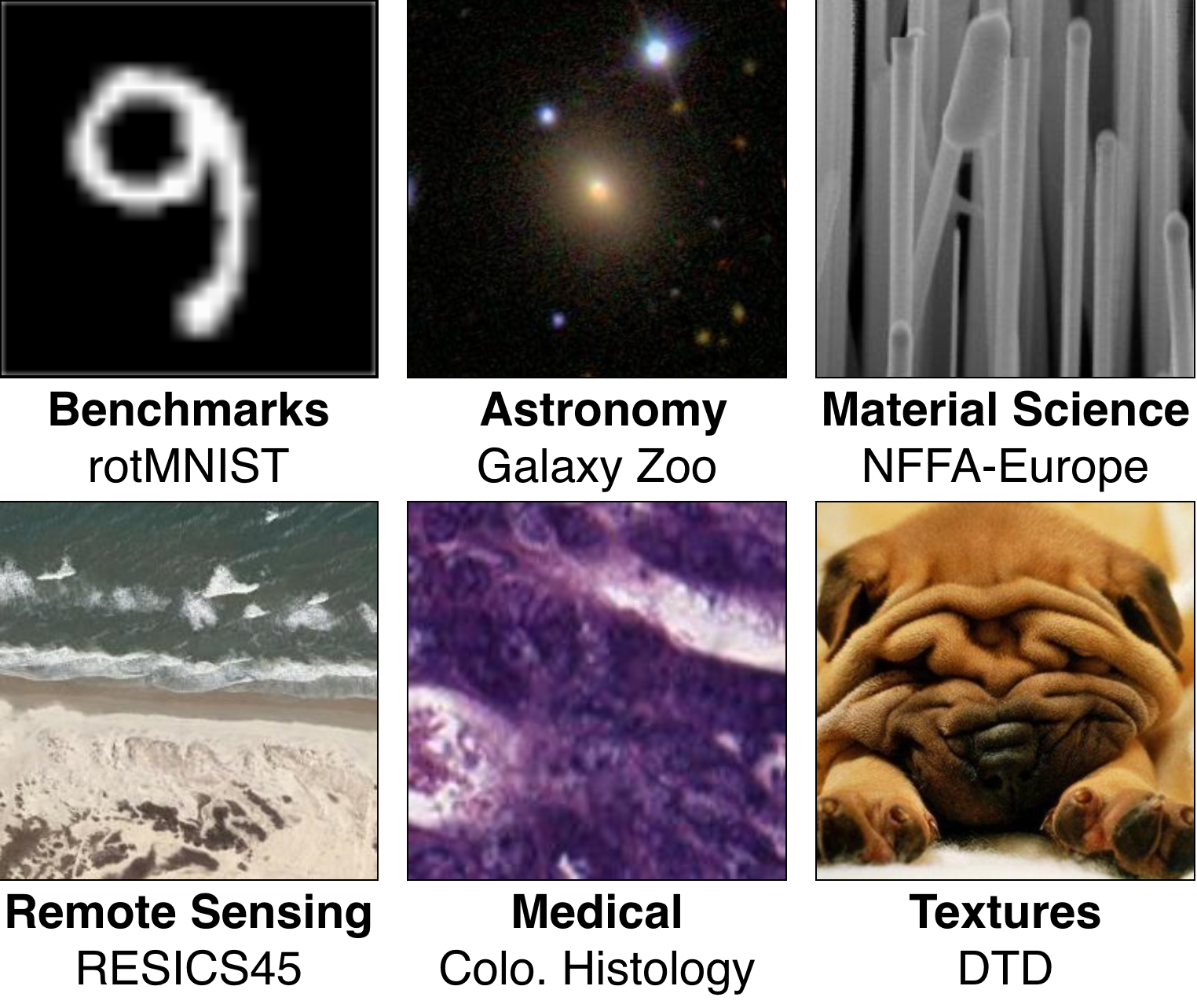}
  \caption{Benchmarks across multiple domains.}
  \label{fig:datasets}
\end{minipage}
\hfill
\begin{minipage}{0.49\textwidth}
  \centering
  \includegraphics[width=\linewidth]{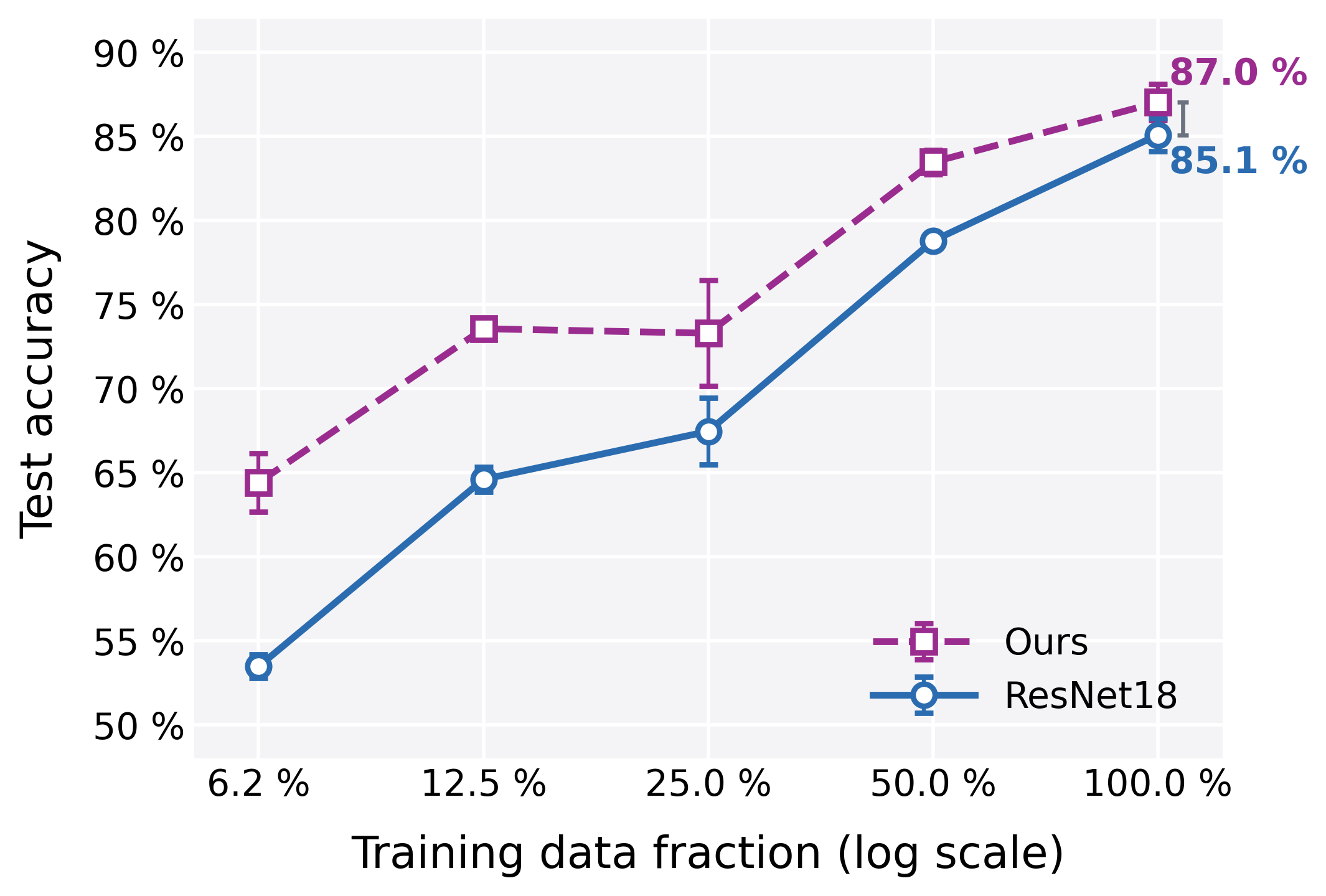}
  \caption{\textbf{Sample efficiency on NFFA.} Test accuracy vs.\ training data fraction. Our model (300k) compared with ResNet18 (11M, aug rot).}
  \label{fig:data-fraction}
\end{minipage}
\end{figure}

\paragraph{Datasets.}
We evaluate our model on a diverse set of datasets covering classical handwritten digits, remote sensing, textures, and scientific imaging. Specifically, we consider MNIST-Rot-Test~\citep{mnist_rot_test}, RotMNIST~\citep{rotmnist}, RESISC45~\citep{resisc45}, EuroSAT~\citep{eurosat}, DTD~\citep{dtd}, Lost Oil Wells~\citep{lost_oil_wells}, Galaxy Zoo~\citep{galaxy_zoo, galaxy_zoo1, galaxy_zoo2}, and Colorectal Histology~\citep{col_hist}. 

\paragraph{NFFA.} In addition, we construct a dataset based on NFFA-Europe~\citep{nffa_europe}, designed to reflect a realistic scientific imaging scenario where data is both scarce and structurally constrained. We manually curate labels and process the images (resizing and cropping to $224\times224$), resulting in a dataset with a small number of samples and high intra-class variability. This setting is particularly suitable for evaluating $\mathrm{SE}(2)$-invariant architectures in the low-data regime. To further examine this behavior, we analyze performance as a function of the available training data (Fig.~\ref{fig:data-fraction}), highlighting improvements in sample efficiency. Further details are provided in Appendix~\ref{appendix:nffa}.

\paragraph{Baselines.}
We compare against $E(2)$-CNNs with continuous $\mathrm{SO}(2)$ equivariance implemented via the \texttt{e2cnn} framework, a widely used and general implementation of steerable convolutional networks. Additional baselines, including H-Nets~\citep{worrall2017}, standard non-equivariant CNNs~\citep{he2016}, and transformer-based models~\citep{dosovitskiy2021} are reported in Appendix~\ref{appendix:quantitative}.

\subsection{Quantitative results}
Across all benchmarks, our approach achieves competitive or better accuracy compared to a baseline while using substantially fewer parameters, as shown in Table~\ref{tab:merged}. In particular, we observe consistent gains on standard rotation benchmarks, as well as on more complex datasets including RESISC45 and EuroSAT. Notably, these improvements are obtained with models that are typically two to three times smaller than their equivariant counterparts.

In addition to higher accuracy, our method yields significantly lower $\mathrm{RCI}$ values across all datasets, indicating stronger consistency under rotations. This difference is especially pronounced on benchmarks with explicit rotations, where our invariant representations reduce $\mathrm{RCI}$ by an order of magnitude, demonstrating a dramatic improvement in the invariance of the classification function. Even in cases where accuracy is comparable or slightly lower (e.g., Galaxy Zoo), our method maintains substantially lower $\mathrm{RCI}$ and reduced parameter count. We attribute this to the invariant construction in Eq.~\eqref{eq:our-basis}, which suppresses unstable phase variations by encoding relative phase differences. We also observe that larger convolutional kernels improve rotational consistency in our method, however, applying larger kernels in \texttt{e2cnn} reduced accuracy while further increasing computational cost and parameter count. Collectively, these results show that structured equivariant feature fields are not necessary complexities for invariant classification.

\begin{table}
\centering
\caption{\textbf{Quantitative comparison between our Mom/Inv \ac{CNN} and $E(2)$-CNNs.} Results are reported as mean $\pm$ standard deviation over multiple runs. For DTD, we report top-five accuracy, denoted by $^\ast$.
}
\label{tab:merged}
\begin{tabular}{lccc|ccc|ccc}
\toprule
& \multicolumn{3}{c|}{Dataset} 
& \multicolumn{3}{c|}{Ours} 
& \multicolumn{3}{c}{E2-CNNs} \\
\cmidrule(r){2-5} \cmidrule(r){5-7} \cmidrule(r){7-10}
Dataset 
& Reso.
& Cls 
& Size 
& Acc 
& Prm. 
& $\mathrm{RCI}$
& Acc 
& Prm. 
& $\mathrm{RCI}$
\\
\midrule
MNIST\tiny{RotTest} 
& $28$ 
& 10 
& 50k 
& $\mathbf{99.42}{\scriptstyle \pm.07}$ 
& 119k
& $\mathbf{0{\scriptstyle .01}}$
& $97.20{\scriptstyle \pm.14}$
& 362k 
& $0{\scriptstyle .89}$ \\

{\tiny rot}MNIST 
& $28$ 
& 10 
& 12/50k 
& $\mathbf{99.08}{\scriptstyle \pm .04} $ 
& 119k
& $\mathbf{0{\scriptstyle .01}}$
& $98.82{\scriptstyle \pm .01}$ 
& 362k 
& $0{\scriptstyle .64}$ \\

RESISC45 
& $256$ 
& 45 
& 31.5k 
& $\mathbf{85.77} {\scriptstyle \pm.47}$ 
&  230k 
&  $\mathbf{1{\scriptstyle .66}}$
& $84.02{\scriptstyle \pm.27}$ 
&  365k
&  $5{\scriptstyle .13}$ 
\\ 

EuroSAT\tiny{rgb}
& $64$ 
& 10 
& 27k 
& $\mathbf{95.29}{\scriptstyle \pm .40}$ 
& 149k 
& $\mathbf{0{\scriptstyle .71}}$
& $93.90{\scriptstyle \pm .34}$ 
& 363k 
& $1{\scriptstyle .60}$ \\


DTD$^\ast$  
& $\tiny{\leq}640$ 
& 47 
& 5k 
& $67.14{\scriptstyle \pm.27}$
& 336k
& $\mathbf{0{\scriptstyle .01}}$
& $67.18{\scriptstyle \pm .81}$ 
& 469k
& $1{\scriptstyle .97}$\\

NFFA
& $224$ 
& 10 
& 25k 
& $87.02 {\scriptstyle \pm.87}$
& 248k
& $\mathbf{0{\scriptstyle .01}}$
& $87.32 {\scriptstyle \pm.30}$
& 363k 
& $1{\scriptstyle .23}$ \\

Galaxy Zoo 
& $424$ 
& $8$ 
& 100/31k 
& $83.30{\scriptstyle \pm.03}$ 
& 227k
& $\mathbf{0{\scriptstyle .27}}$
& $\mathbf{84.39}{\scriptstyle \pm.08}$
& 363k 
& $0{\scriptstyle .82}$ \\

Col. Hist. 
& $150$ 
& 8 
& 5k
& $\mathbf{92.30}{\scriptstyle \pm.40}$
& 228k 
& $\mathbf{0{\scriptstyle.59}}$
& $91.58{\scriptstyle \pm .41}$
& 363k
& $1{\scriptstyle .16}$ \\
\bottomrule
\end{tabular}
\end{table}

\subsection{Ablation Studies}
We conduct ablation studies to evaluate key design choices in the proposed invariant moment architecture, including invariant basis selection (ours vs.\ Flusser), handling of degenerate reference moments (Appendix~\ref{appendix:flexible}), radial function parameterizations (Appendix~\ref{appendix:radial}), and the addition of an equivariant (type-1) branch (Appendix~\ref{appendix:equivariant}). We also report a speed comparison and analyze sample efficiency on the NFFA benchmark, highlighting performance in the low-data regime.

\paragraph{Sample Efficiency.}
We evaluate sample efficiency by training models with various fractions of data from our NFFA benchmark. As shown in Fig.~\ref{fig:data-fraction}, our model consistently outperforms a ResNet18 baseline trained with standard rotation and flip augmentations, particularly in the low-data regime. The performance gap is most pronounced at smaller data fractions (e.g., 6.2\% and 12.5\%), where our approach achieves higher accuracy. As the amount of data increases, the gap narrows.

Notably, these results are obtained with a substantially smaller model: our network has approximately 300k parameters, compared to 11M for ResNet18. This further shows how incorporating group theoretical constructions into ML models can improve sample efficiency.

\paragraph{Speed Comparison.} We compare the forward-pass runtime of our invariant moment layer against $E(2)$-CNNs as a function of the maximum moment type. Results are shown in Fig.~\ref{fig:speed_ablation}. We measure equivalent building blocks: in steerable \acp{CNN} (e.g., \texttt{e2cnn}), this corresponds to two successive equivariant convolutional layers, whereas our construction uses a moment--invariant block of the form $\text{Mom/Inv} \rightarrow 1\times1 \rightarrow \text{Mom/Inv}$. Our method achieves lower runtime; this behavior is consistent with the theoretical complexity. In steerable \acp{CNN}, convolutional layers require interactions between all input and output types, resulting in $\BigO(N_{\text{in}} \cdot N_{\text{out}})$ complexity. In contrast, our construction avoids explicit type-to-type interactions by projecting directly to invariant (type-$0$) features, leading to linear scaling in the number of moments. The empirical results closely follow this trend, with speedups reaching up to $14\times$ for higher moment orders. A minimal implementation of the invariant construction is provided in Appendix~\ref{appendix:implementation}. For example, on the NFFA benchmark with maximum type $k=3$, the full model requires 227 seconds per epoch on a single A100 GPU using \texttt{e2cnn}, compared to 98 seconds for our method.

\paragraph{Invariant Construction.}
We analyze the effect of different invariant constructions. As shown in Table~\ref{tab:invariant_repr}, representations based on moment magnitudes combined with relative phase interactions (Eq.~\eqref{eq:our-basis}) consistently outperform those based on real and imaginary components of Flusser basis~\eqref{eq:flusser_basis}. In practice, we found that the Flusser-based representation requires additional normalization (e.g., a $\sigma$ function applied to the magnitudes) to prevent numerical instabilities due to large magnitude values.

\begin{figure}
\centering
\begin{minipage}{0.49\textwidth}
  \centering
  \captionof{table}{Comparison of invariant representations. Using magnitude and phase-interaction components yields consistently better performance than using real and imaginary parts directly.}
  \label{tab:invariant_repr}
  \begin{tabular}{lccc}
  \toprule
  Dataset 
    & Flusser 
    & Ours \\
  \midrule
  MnistRotTest 
    & $99.27 \pm .05$
    & $\mathbf{99.42 \pm .07}$ \\

  RotMnist 
    & $98.77 \pm .03$ 
    & $\mathbf{99.08 \pm .04}$ \\

  RESISC45 
    & $83.33 \pm .21$ 
    & $\mathbf{85.77 \pm .47}$ \\

  EuroSAT 
    & $93.77 \pm .36$ 
    & $\mathbf{95.29 \pm .40}$ \\
  
  \bottomrule
\end{tabular}
\end{minipage}
\hfill
\begin{minipage}{0.47\textwidth}
  \centering  \includegraphics[width=0.95\linewidth]{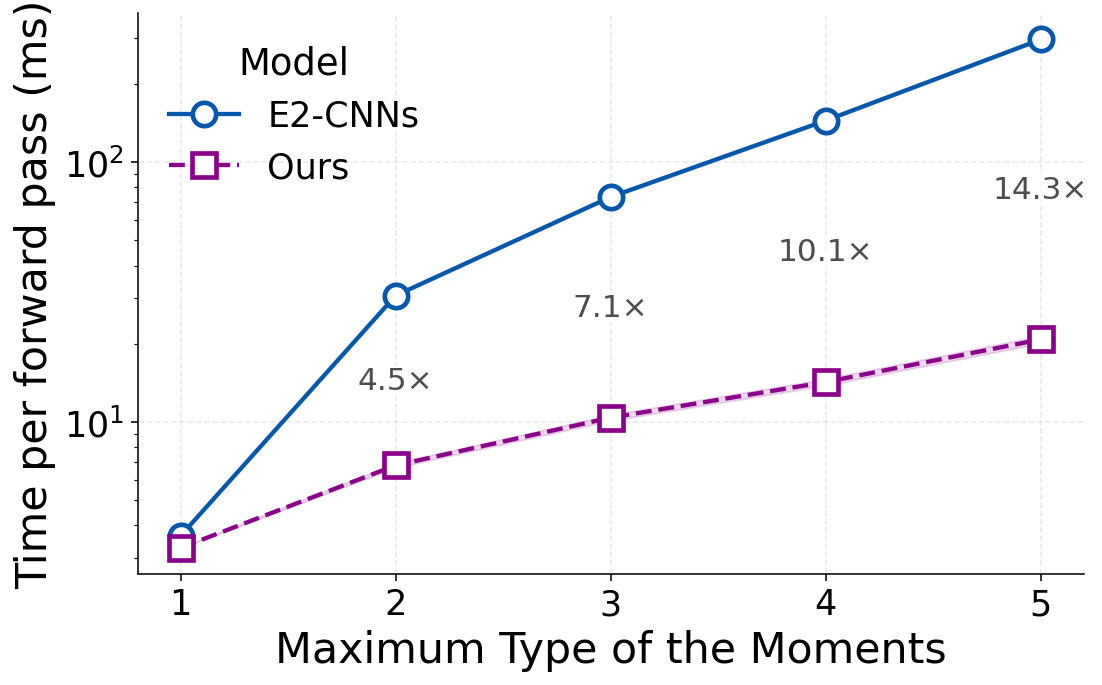}
  \caption{Layer Forward Pass Time}
  \label{fig:speed_ablation}
\end{minipage}
\end{figure}

\section{Conclusion and Limitations}
We introduced an invariant architecture for constructing efficient $\mathrm{SE}(2)$-invariant networks. To connect classical invariant theory with modern deep learning, we derived a minimal and discriminative invariant map that preserves all information up to the $\se$ group action while avoiding redundant features. We translated this construction into a plug-and-play \ac{CNN} module that projects moments directly into invariant type-$0$ features, enabling seamless integration with standard \ac{CNN} layers such as pointwise nonlinearities, normalization layers, and $1\times1$ convolutions.

Empirically, the proposed models achieve competitive or improved accuracy across a range of rotation-symmetric benchmarks while using substantially fewer parameters and with lower runtime than steerable equivariant baselines. These gains are particularly evident in settings involving low data and scientific imaging, where symmetry priors improve sample efficiency. Overall, the results indicate that invariant maps can provide an effective and computationally efficient alternative to fully equivariant features. Notably, performance remains strong despite enforcing local invariance, indicating that relative orientation can still be inferred from invariant features computed over the continuous spatial domain (Appendix~\ref{appendix:equivariant}).

Despite these advantages, the approach is most beneficial for tasks with strong rotational symmetry, where orientation is a nuisance variable. In high-data regimes, similar robustness can often be achieved through data augmentation and increased model capacity. As the theory assumes continuous transformations, invariance is only approximate in discretized implementations. Finally, our invariant map relies on non-zero moment responses; features may become unreliable when moment magnitudes are very small. In Appendix~\ref{appendix:flexible}, we discuss solutions to this issue and show that such cases are rare in practice and can be mitigated by computations over the continuous spatial domain.

Future work may explore adaptive selection of moment types, extensions to broader transformation groups, and applications that combine invariant and equivariant representations. In particular, extending the approach to three-dimensional settings, for example by projecting 3D features through 2D as explored in prior work~\citep{passaro2023}, is a promising direction.

\section{Related Work}
\paragraph{Moment Invariants and Classical Pattern Recognition.}
The construction of invariant features has a long history in pattern recognition. Hu introduced one of the first sets of rotation-invariant moments, demonstrating their effectiveness for shape recognition~\citep{hu1962}. Complementary to moment-based invariants, steerable filters provided an early framework for handling rotations by expressing filter by linear combinations of a fixed set of basis filters~\citep{freeman1991}. This idea introduced a classical form of rotation-equivariant feature extraction and later became an important concept to steerable convolutional architectures. Many early hand-designed invariant sets were redundant or incomplete. Flusser and collaborators formalized the notion of invariant feature systems that are both functionally independent and complete, meaning they can generate any invariant~\citep{flusser2000,flusser2002}. Wood already pointed out that invariant representations should not only remove nuisance redundancy but also retain discriminative information. Subsequent work further explored completeness and independence~\citep{flusser2009}, as well as extensions to more general settings, including tensor fields and flexible invariant bases that avoid degenerate cases~\citep{bujack2017,bujack2022}.

\paragraph{Equivariant Convolutional Networks.}
A large body of work extends \acp{CNN} to encode symmetries beyond translations by enforcing equivariance to transformation groups. Group equivariant \acp{CNN} generalize convolutions to discrete and compact groups~\citep{cohen2016}. Subsequent work extended these ideas to continuous groups, including planar rotations and Euclidean groups, using steerable filters and tools from representation theory. For instance, harmonic networks achieve rotation equivariance by constraining filters to a circular harmonics basis~\citep{worrall2017}. More general frameworks such as steerable CNNs formalize this approach by constructing feature spaces that transform according to irreducible representations of the underlying group, often referred to as steerable fields~\citep{weiler2018,e2cnn,cesa2022,weiler2018b}. Libraries such as \texttt{e2cnn}~\citep{e2cnn} and \texttt{escnn}~\citep{cesa2022} provide practical implementations of these methods. These models typically introduce additional computational and architectural complexity compared to standard \acp{CNN}, and are most commonly applied in 2D settings~\citep{bekkers2018,han2021, pandya2023, mondal2023, santellani2023}.

Beyond $\se$ equivariant CNNs, related work has also explored equivariance in higher-dimensional domains and non-convolutional architectures, including 3D steerable CNNs, tensor field networks, $\mathrm{SE}(3)$-equivariant transformers, and equivariant graph/attention models for point clouds, molecular systems, and physical simulation~\citep{weiler2018,thomas2018,kondor2018,fuchs2020,equiformerv1,equiformerv2,brandstetter2022,musaelian2023, bekkers2024, Allen2026}.

\paragraph{Invariant Features in Deep Learning.}
An alternative to equivariance is to construct invariant representations directly. Early methods achieved invariance by pooling over transformed inputs, as in TI-Pooling~\citep{laptev2016}, by learning canonicalizing transformations, as in scattering transforms~\citep{mallat2012}. Other architectures introduced orientation-aware or rotation-equivariant feature extraction within CNNs, including Oriented Response Networks~\citep{zhou2017} and Rotation Equivariant Vector Field Networks~\citep{marcos2017}.  Recent work has revisited invariance within standard \acp{CNN} through analytically defined Bessel kernels~\citep{delchevalerie2021} and rotation-invariant coordinate convolutions~\citep{mo2024}. However, these methods typically rely on pooling, weight sharing, learned canonicalization, or specific filter parameterizations, and do not explicitly construct complete and non-redundant invariant feature spaces in the classical sense.

\paragraph{Acknowledgemetns}
We thank Kei Davis for feedback on the manuscript. This work is published under LANL LA-UR-26-23375.
We gratefully acknowledge the support of the U.S. Department of Energy through
the LANL Laboratory Directed Research Development Program under project number
20250145ER. 
\bibliographystyle{unsrtnat}
\bibliography{neurips}


\clearpage
\appendix
\section{Flexible Bases}
\label{appendix:flexible}
Flexibility was first formalized by Bujack et al.~\cite{bujack2017}, who observed that the Flusser basis (Eq.~\eqref{eq:flusser_basis}) cannot distinguish patterns for which all type-1 moments vanish. This occurs, for example but not exclusively, for $n$-fold symmetric patterns. For example, in the four-particle configuration in Fig.~\ref{fig:flexible}, rotations affect only the phase of \(m_4=\mu(4,0)\), while the reference moment \(m_1=\mu(1,0)\) remains zero. Since the Flusser basis uses \(m_1\) as a fixed phase reference, the resulting invariants become degenerate for such patterns. Similar phenomena arise in three-dimensional settings, including 
atomic structure representations~\cite{pozdnyakov2020, bujack2025flexible} and geometric graph neural 
networks~\cite{joshi2023}, where distinct configurations may become indistinguishable 
under insufficiently expressive invariant/equivariant features.

\todo[]{Change picture if we have time. It might be lost because of the mac. }
\begin{figure}[h]
  \centering
  \includegraphics[width=\linewidth]{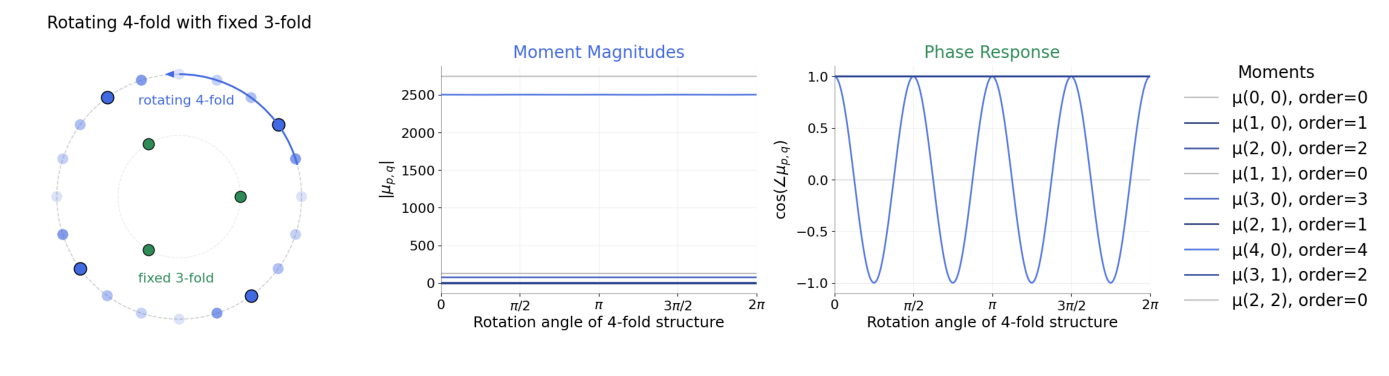}
  \caption{Flexible benchmark and its invariant representation. Left: data generation from rotated three-fold and four-fold structures, with labels determined by their minimal distance. Right: projection into Flusser complex moments, showing the structured embedding induced by invariant features. Inspired by graph neural network benchmarks \cite{joshi2023}.}
  \label{fig:flexible}
\end{figure}

This motivates the flexible basis~\cite{bujack2017}. Rather than fixing \(m_1\) as the reference moment, the flexible construction selects a nonzero moment \(m_k\) as the phase reference. This keeps the representation informative when lower-order moments vanish. If the degeneracy structure is unknown, one can consider each nonzero moment as a possible reference, yielding a larger flexible invariant set at the cost of a quadratic increase in moment combinations. The flexible basis by Bujack et al. \citep{bujack2017} has two shortcomings. The first is that it requires a-priori knowledge about the input pattern. The second is that it requires keeping multiple frames in case of rotationally symmetric patterns. In the next section, we will show that we can overcome both.

\subsection{Synthetic flexibility benchmark}
We test this effect in a pixel-based synthetic classification task. Each sample contains a three-fold and a four-fold structure on circles of different radii. The binary label is determined by their minimal distance, split at the median to obtain a balanced dataset. The point configurations are projected onto a regular grid and smoothed with a Gaussian kernel to reduce interpolation artifacts from discretization and rotation.

We evaluate a simple convolutional model in which a single Mon/Inv-layer filter spans the full spatial resolution of the image under valid convolution, producing a \(1 \times 1\) feature map that is passed to an MLP classifier. With the non-flexible Flusser basis (Eq.~\eqref{eq:flusser_basis}) and non-flexible optimal basis (Eq.~\eqref{eq:our-basis}), the model achieves only chance-level accuracy, approximately \(50\%\). In contrast, applying the flexible construction to the optimal basis increases accuracy to approximately \(99\%\).

Interestingly, adding padding allows the non-flexible bases to recover performance comparable to the flexible bases. Padding introduces boundary context that can break the perfect symmetry seen by the valid convolution, thereby compensating for the degenerate reference moment. Thus, padding acts as an architectural shortcut: it restores empirical performance without resolving the underlying limitation of the non-flexible basis.

\subsection{Implications for Dense Invariant \acp{CNN}}
In our \acp{CNN}, replacing the optimal basis with its flexible variant did not lead to measurable improvements in classification accuracy. To understand this, we measured how often candidate reference moments vanish across channels and spatial locations. Table~\ref{tab:empflex} reports the fraction of near-zero moments for several datasets.

The observed frequencies are extremely small across all datasets and moment orders. This suggests that, in dense pixel or feature-map representations, exact moment degeneracies are rare. Unlike sparse point or graph representations, where highly symmetric configurations can make reference moments vanish exactly, spatially extended convolutional features and image discretization tend to break such degeneracies.

Together with the synthetic benchmark in Fig.~\ref{fig:flexible}, these results suggest that flexibility primarily matters in settings with sparse or highly symmetric inputs, where a chosen reference moment may vanish while alternative moments remain available. In contrast, such degeneracies occur only rarely in the dense feature representations produced by our \acp{CNN} (tested before and after training). This differs from settings such as geometric graph neural networks~\cite{joshi2023}, where sparsity persists throughout the network and indistinguishable configurations arise more frequently.

\begin{table}[h]
\centering
\begin{tabular}{lccccc}
\toprule
\textbf{Dataset} 
& $\mathbf{m_{1,1}}$
& $\mathbf{m_{2,0}}$
& $\mathbf{m_{2,1}}$  
& $\mathbf{m_3}$ 
& $\mathbf{m_4}$ \\
\midrule
Histology 
& $8.7\mathrm{e}{-5}$ 
& $2.7\mathrm{e}{-4}$ 
& $2.1\mathrm{e}{-4}$ 
& $7.1\mathrm{e}{-5}$ 
& $5.1\mathrm{e}{-5}$ \\

MNIST 
& $1.4\mathrm{e}{-6}$ 
& $1.4\mathrm{e}{-6}$ 
& $1.4\mathrm{e}{-6}$ 
& $1.4\mathrm{e}{-6}$ 
& $1.4\mathrm{e}{-6}$ \\

Satellite 
& $1.8\mathrm{e}{-5}$ 
& $2.2\mathrm{e}{-4}$ 
& $4.9\mathrm{e}{-4}$ 
& $2.3\mathrm{e}{-6}$ 
& $1.4\mathrm{e}{-6}$ \\
\bottomrule
\end{tabular}
\caption{Conditional probability that another moment is nonzero given that the reference moment \(m_{1,0}\) is zero, measured across channels and spatial locations. Larger values indicate cases where flexible reference selection could avoid degeneracies caused by a vanishing reference moment. The consistently small probabilities indicate that situations where an alternative reference moment could be used are exceedingly rare in these CNN feature representations.}
\label{tab:empflex}
\end{table}

\section{Optimal Invariant Descriptors}
\label{appedix:proof}
In this appendix, we show how to derive an optimal map of descriptors, by which we mean that they are discriminative and minimal, i.e., the descriptor is able to distinguish any two inputs that differ by something other than a rotation, and that removing any element breaks that property~\cite{wood1996}.

After some notation, we will first introduce the \textbf{Bézout map}, which can discriminate a known input from every other pattern. Then, we describe the \textbf{minimal discriminative map}. We prove that it achieves general discriminability without a-priori knowledge by deriving any Bézout map from its descriptors. We also prove invariance and minimality for both $\oo$ and $\so$.

Note that discriminability refers to a set of admissible input patterns. Generally, we consider patterns only up to a maximal finite order. Flusser's basis can discriminate any pattern that has a non-zero moment of type one. The Bézout map is more powerful. It can discriminate any pattern as long as we know it a-priori. The flexible map is the most powerful one. It can discriminate any patterns without prior knowledge.

\subsection{From Complex to Fourier Moments}
Complex moments are typically used with two indices~\cite{flusser2000}. To make notation more concise and generalize to moments with other radial parts, such as Zernike moments, in this section we split angular from radial parts.

Let \(f:\mathbb{R}^2\to\mathbb{R}\), and write
\(z=x+iy=re^{i\theta}\). Assume the following moments exist. The complex
moments are
\begin{equation}\label{eq:c_pq}
c_{p,q}
=
\int_{\mathbb{R}^2} z^p\bar z^q f(x,y)\,dx\,dy .
\end{equation}
In polar coordinates,
\begin{equation}\label{eq:c_pq_polar}
c_{p,q}
=
\int_0^\infty\int_0^{2\pi}
r^{p+q} e^{i(p-q)\theta} f(r,\theta)\, r\,d\theta\,dr .
\end{equation}
Thus under rotation $f_\alpha(r,\theta) = f(r,\theta-\alpha)$ the moment \(c_{p,q}\) transforms with type
\begin{equation}\label{eq:spin}
k:=p-q,
\end{equation}
and
\begin{equation}\label{eq:rotation_c}
c_{p,q}(f_\alpha)=e^{ik\alpha}c_{p,q}(f).
\end{equation}

Hence we reduce to Fourier moments
\begin{equation}\label{eq:m_k}
m_k:=\int_0^{2\pi} f(\theta)e^{ik\theta}\,d\theta,
\qquad k\in\mathbb{Z}.
\end{equation}
For real-valued \(f\),
\begin{equation}\label{eq:real_sym}
m_{-k}=\overline{m_k}.
\end{equation}
Under rotation \(f_\alpha(\theta)=f(\theta-\alpha)\), the moments transform via
\begin{equation}\label{rot}
m_k(f_\alpha)
=
\int_0^{2\pi} f(\theta-\alpha)e^{ik\theta}\,d\theta
\;\overset{u=\theta-\alpha}{=}\;
\int_0^{2\pi} f(u)e^{ik(u+\alpha)}\,du
=
e^{ik\alpha} m_k(f).
\end{equation}
Under reflection \(\tilde f(\theta)=f(-\theta)\), the moments transform via
\begin{equation}\label{refl}
m_k(\tilde f)
=
\int_0^{2\pi} f(-\theta)e^{ik\theta}\,d\theta
\;\overset{u=-\theta}{=}\;
\int_0^{2\pi} f(u)e^{-iku}\,du
=
m_{-k}(f)
\;\overset{\eqref{eq:real_sym}}{=}\;
\overline{m_k(f)}.
\end{equation}

\subsection{Bézout Map}
In this section, we show that a basis of descriptors can be chosen if the input is known a-priori. For a given set of input moments, we derive a map that discriminates it from any other set of moments using Bézout's identity~\cite{hardy1979introduction}.

Let
\begin{equation}\label{eq:M_def}
M=\{m_k \mid k\in\mathbb{N},\; k\le N_k\}
\end{equation}
be the set of Fourier moments of a real-valued function
\begin{equation}\label{eq:f_def}
f:S^1\to\mathbb{R}
\end{equation}
up to order \(N_k\in\mathbb{N}\).
In case all moments are zero, we define all descriptors to be zero as well. From now on, we assume at least one moment is non-zero.

Let
\begin{equation}\label{eq:S_def}
S:=\{\, k \mid m_k\neq 0,\; k\in\mathbb{N},\; k\le N_k \,\}
\end{equation}
be the set of indices with non-zero moments, and let
\begin{equation}\label{eq:g_def}
d=\gcd(S)
\end{equation}
be their greatest common divisor. Then all non-zero moment indices are multiples of $d$,
\begin{equation}\label{eq:g_divides_k}
\forall\, m_k\neq 0:\quad d\mid k,
\quad\text{i.e.}\quad \frac{k}{d}\in\mathbb{N}.
\end{equation}

If zero moments are present, we set their Bézout coefficients to zero, i.e., \(a_k:=0\). By Bézout's identity there exist integers \(a_k\in\mathbb{Z}\) such that
\begin{equation}\label{eq:bezout}
\sum_{k=0}^{N_k} a_k\, k = d.
\end{equation}

This yields a rotational normalizer of type \(d\),
\begin{equation}\label{eq:mprime}
m' := \prod_{k=0}^{N_k} m_k^{a_k},
\qquad 
\mathrm{type}(m')=\sum_{k=0}^{N_k} a_k k=d.
\end{equation}

To simplify the inverse problem, we normalize \(m'\) to unit magnitude:
\begin{equation}\label{eq:m_norm}
m:=\frac{m'}{\sqrt{m'\overline{m'}}},
\end{equation}
where the square root denotes the positive one.

The Bézout map is then defined as
\begin{equation}\label{eq:Psi_def}
\Psi := \{\, \psi_k \mid k = 0,\dots, N_k \,\}.
\end{equation}

For each \(k\), we define
\begin{equation}\label{eq:psi_general}
\psi_k
=
m_k^{\frac{d}{\gcd(d,k)}}
\, m^{\frac{-k}{\gcd(d,k)}}.
\end{equation}

Since \(d\mid k\) for all non-zero moments, we have \(\gcd(d,k)=d\), and thus
\begin{equation}\label{eq:psi_simplified}
\psi_k
=
m_k^{\frac{d}{d}}
\, m^{-\frac{k}{d}}
=
m_k\, m^{-\frac{k}{d}}.
\end{equation}

This expression is well-defined since \(\frac{k}{d}\in\mathbb{N}\). 

\subsubsection{Invariance}\label{bezoutInv}
\begin{theorem}
The Bézout map is invariant to rotations.
\end{theorem}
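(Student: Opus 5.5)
The plan is a direct computation with the transformation law \eqref{rot}, $m_k(f_\alpha)=e^{ik\alpha}m_k(f)$. First I would check that the combinatorial data of the construction are rotation invariant. Since $|e^{ik\alpha}|=1$, we have $m_k(f_\alpha)=0$ if and only if $m_k(f)=0$. Hence the index set $S$ in \eqref{eq:S_def}, the divisor $d=\gcd(S)$ in \eqref{eq:g_def}, and the Bézout coefficients $a_k$ in \eqref{eq:bezout} can all be taken to be the same for $f$ and $f_\alpha$. I would state explicitly that the $a_k$ are fixed by a deterministic rule depending only on $S$, so the same exponents are used for both patterns. The degenerate case where all moments vanish is trivial, since all descriptors are then zero by definition.

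Next, I would compute how the normalizer transforms. From \eqref{eq:mprime},
\begin{equation*}
m'(f_\alpha)=\prod_k \bigl(e^{ik\alpha}m_k(f)\bigr)^{a_k}=e^{i\alpha\sum_k a_k k}\,m'(f)=e^{id\alpha}m'(f).
\end{equation*}
Negative integer powers are allowed here because $a_k\neq 0$ only when $m_k\neq 0$. The modulus $\sqrt{m'\overline{m'}}=|m'|$ is unchanged, so the unit normalizer \eqref{eq:m_norm} satisfies $m(f_\alpha)=e^{id\alpha}m(f)$. In other words, it has type $d$ and unit modulus.

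Then I would treat the descriptors in \eqref{eq:psi_simplified}. If $m_k(f)=0$, then $m_k(f_\alpha)=0$ as well, and $\psi_k=0$ for both patterns. Otherwise $d\mid k$, and the integer power gives
\begin{equation*}
m(f_\alpha)^{-k/d}=e^{-ik\alpha}m(f)^{-k/d}.
\end{equation*}
Therefore
\begin{equation*}
\psi_k(f_\alpha)=e^{ik\alpha}m_k(f)\,e^{-ik\alpha}m(f)^{-k/d}=\psi_k(f).
\end{equation*}

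The only genuine subtlety is well-definedness rather than the algebra itself. One must ensure that $m^{-k/d}$ is an integer power, so that no branch of a complex root is needed; this is exactly \eqref{eq:g_divides_k}. One must also ensure that the choice of the $a_k$ is identical for $f$ and $f_\alpha$. I would address the latter by the observation above that $S$ is invariant, together with fixing a canonical choice of Bézout coefficients.
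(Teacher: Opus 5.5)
Your proposal is correct and takes the same approach as the paper, whose proof is exactly the direct computation $m_k\mapsto e^{ik\alpha}m_k$, $m\mapsto e^{id\alpha}m$, hence $\psi_k=m_k m^{-k/d}\mapsto e^{ik\alpha}m_k\,e^{-ik\alpha}m^{-k/d}=\psi_k$. Your extra checks go beyond what the paper writes down: that $S$, $d$ and the Bézout coefficients $a_k$ coincide for $f$ and $f_\alpha$, that $k/d$ is an integer so no branch choice arises, and the vanishing-moment case. The paper leaves these implicit, so including them makes your argument more complete.
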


\begin{proof}
We know from~\eqref{rot} that under rotations $\theta\mapsto\theta-\alpha$
\begin{equation}
m_k \mapsto e^{ik\alpha}m_k,
\qquad
m \mapsto e^{ig\alpha}m.
\end{equation}
Hence
\begin{equation}
\psi_k
=
m_k m^{-k/d}
\mapsto
e^{ik\alpha}m_k
\left(e^{id\alpha}m\right)^{-k/d}
=
e^{ik\alpha}m_k e^{-ik\alpha}m^{-k/d}
=
\psi_k.
\end{equation}
\end{proof}

The Bézout map is not invariant to reflections. It follows from~\eqref{refl} that the imaginary part of each descriptor changes sign upon reflection while its real part remains unchanged. Therefore, the Bézout map can be easily adapted to be invariant to rotations and reflections by replacing each complex descriptor by its magnitude and real part
\begin{equation}
\Psi_O := \{\, |\psi_k|, \Re(\psi_k) \mid k = 0,\dots, N_k \,\}.
\end{equation}
A more flexible/robust choice, though, is to use the magnitude of the moments and the real part. If a pattern with a different support is used than the one for which the Bézout map was created, its normalizer can vanish ($m=0$), which causes every element of the Bézout map to vanish, except the trivial ones with $k=0$, because they are all products of $m$. Instead, using 
\begin{equation}
\Psi^{\prime}_O := \{\, |m_k|, \Re(\psi_k) \mid k = 0,\dots, N_k \,\}
\end{equation}
keeps all the magnitudes usable, maintaining half the information.

\subsubsection{Discriminability}
\begin{theorem}
The Bézout map is discriminative.
\end{theorem}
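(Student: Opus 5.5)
We read the statement in the sense fixed at the start of this subsection: the Bézout map is built for a known input $f$ with moment set $M$, index set $S$, $d=\gcd(S)$ and coefficients $a_k$. Discriminative then means the following. If $\tilde f$ is another admissible real-valued function with moments $\tilde m_k$, and $\Psi(\tilde f)=\Psi(f)$ with the \emph{same} exponents $a_k$ used in both, then $\tilde m_k=e^{ik\alpha}m_k$ for some $\alpha$ and all $0\le k\le N_k$. The plan is to solve the inverse problem: recover the moments from the descriptors up to a single rotation.

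\textbf{Step 1: moduli and the normalizer.} From \eqref{eq:m_norm}, $|m|=1$ whenever $m'\neq 0$, so $m=e^{i\beta}$ and $\psi_k=m_k e^{-i\beta k/d}$. Two consequences follow.
\begin{itemize}
\item Taking moduli gives $|\psi_k|=|m_k|$. Hence $\Psi(f)=\Psi(\tilde f)$ forces $|\tilde m_k|=|m_k|$ for every $k$, so the zero patterns agree: $\tilde S=S$.
\item Zero moments give zero descriptors on both sides. This is consistent with $a_k=0$ for $k\notin S$, so $\tilde m'$ is a product of the same nonzero-index factors, and $\tilde m'\neq 0$.
\end{itemize}
Write $\tilde m=e^{i\tilde\beta}$.

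\textbf{Step 2: reconstruction.} Inverting \eqref{eq:psi_simplified} gives $m_k=\psi_k e^{i\beta k/d}$ and $\tilde m_k=\psi_k e^{i\tilde\beta k/d}$. For $k\in S$ we have $d\mid k$, and the exponent $k/d$ is an integer. Setting $\alpha:=(\tilde\beta-\beta)/d$ yields
\begin{equation*}
\tilde m_k=e^{ik\alpha}m_k \qquad \text{for all } k\in S.
\end{equation*}
For $k\notin S$ both sides vanish, so the relation holds for every $k\le N_k$. By \eqref{rot} this is exactly the moment set of the rotated function $f_\alpha$. So $\tilde f$ and $f_\alpha$ agree in all moments up to order $N_k$, which is the level at which patterns are compared here.

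\textbf{Main obstacle: consistency of $\tilde m$.} The subtle point is that $\tilde\beta$ is \emph{not} free. It is determined by $\tilde m'=\prod_k\tilde m_k^{a_k}$, and the relation must be self-consistent. The check I would carry out is
\begin{equation*}
\tilde m'=\prod_k \left(\psi_k e^{i\tilde\beta k/d}\right)^{a_k}
=e^{i\tilde\beta\sum_k a_k k/d}\prod_k\psi_k^{a_k}
=e^{i\tilde\beta}\prod_k\psi_k^{a_k},
\end{equation*}
using \eqref{eq:bezout}. It imposes the constraint $\prod_k\psi_k^{a_k}\in\mathbb{R}_{>0}$, which holds identically for both patterns since their $\psi_k$ coincide, so no contradiction arises. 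The same computation for $f$ confirms that $\beta$ and $\tilde\beta$ are both consistent.

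The $d$-th root ambiguity deserves comment: $\alpha$ is only determined modulo $2\pi/d$. This is harmless, because every nonzero moment has type divisible by $d$, so all choices give the same rotated moments. I expect this bookkeeping, together with negative Bézout exponents requiring nonzero moments (guaranteed by $a_k=0$ off $S$), to be the only delicate part.
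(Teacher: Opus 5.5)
Your proof is correct and follows the paper's argument, just made explicit: the paper rotates each pattern into the canonical gauge $m'\in\mathbb{R}^+$, where $m=1$ and $\psi_k=m_k$, and your angle $\alpha=(\tilde\beta-\beta)/d$ is the composition of the two gauge-fixing rotations. One fix is needed in Step 1. As written it is circular: you use $|\tilde\psi_k|=|\tilde m_k|$, which already requires $\tilde m'\neq 0$, to obtain $\tilde S=S$, and only then deduce $\tilde m'\neq 0$. State instead at the outset that, with the $a_k$ fixed from $f$, $\Psi(\tilde f)$ can equal $\Psi(f)$ only if $\tilde m'$ is finite and nonzero; otherwise $\Psi(\tilde f)$ is undefined, or under the paper's convention has $\tilde\psi_k=0$ for all $k\ge 1$. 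Your consistency check on $\prod_k\psi_k^{a_k}$ is an identity, and this direction of the proof does not need it.
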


\begin{proof}
The full Fourier transform is injective on the function class considered. Since we only use moments up to order $N_k$, it suffices here to prove discriminability of the truncated moment vectors up to the rotation action.

We solve the inverse problem analogously to Flusser~\cite{flusser2002} by working in the canonical gauge, i.e., in the chosen standard position where \(m'\in\mathbb R^+\). Then, we immediately know that \(m=1\) and the reconstruction of the moments becomes trivial
\begin{equation}
m_k=\psi_k m^{k/d}=\psi_k.
\end{equation}
\end{proof}

\subsubsection{Minimality}
The Bézout Map is not in general independent or minimal. Assume that there exists an index
$l$ such that
$a_l = 1$, then we can replace the complex $\psi_l$ with only its norm $|\psi_l|$. 

We construct $\psi_l$ in the canonical gauge. There the normalized B\'ezout normalizer satisfies $m'=|m'|\in\mathbb{R}^+$, which leads to $m=1$ and
\begin{equation}\label{canonical}
\psi_k \overset{\eqref{eq:psi_simplified}}= m_km^{\frac{-k}d} = m_k.
\end{equation}
Using \eqref{eq:mprime} and $|\psi_l|$, we can first compute
\begin{equation}
m'=|m'|=\prod_{k=0}^{N_k} |m_k^{a_k}|.
\end{equation}
Then since \(a_l=1\), we can isolate \(m_l\)
\begin{equation}
\psi_l
\overset{\eqref{canonical}}=
m_l
\overset{\eqref{eq:mprime}}=
m'\prod_{\substack{k=0\\k\neq l}}^{N_k} m_k^{-a_k}
\overset{\eqref{canonical}}=
m'
\prod_{\substack{k=0\\k\neq l}}^{N_k}
\psi_k^{-a_k}.
\end{equation}
An analogous result can be derived for $a_l=-1$.
In both cases, the map 
\begin{equation}
\Psi_{min} := \{\, \psi_0,...,\psi_{l-1},|\psi_l|,\psi_{l+1},...,\psi_{N_k}\}
\end{equation}
is minimally discriminative.


\subsubsection{Relation to Flusser's Basis}

If \(m_1\neq 0\), we have $d=1$ and can directly choose \(a_1=1,a_2=0,...,a_{N_k}=0\), which leads to \(m'=m_1\),
\begin{equation}
m=\frac{m_1}{|m_1|}=\left(\frac{\overline{m_1}}{|m_1|}\right)^{-1}
\end{equation}
and
\begin{equation}\label{eq:flusser}
\psi_k
\overset{\eqref{eq:psi_simplified}}=
m_km^{\frac{-k}d}
\overset{\eqref{eq:mprime}}=
m_k\left(\frac{\overline{m_1}}{|m_1|}\right)^{k}.
\end{equation}
This is exactly the Flusser basis~\cite{flusser2000} up to normalization.

Analogously to Sec.~\ref{bezoutInv}, we can make Flusser's basis 
\begin{equation}
\Phi_{F} = \{\, |m_0|, |m_1|,\psi_k \mid k = 2,\dots,o \,\}
\end{equation}
invariant to rotations and reflections by 
\begin{equation}
\{\, |m_0|, |m_1|, |\psi_k|, \Re(\psi_k) \mid k = 2,\dots, N_k \,\}
\end{equation}
and more flexible and robust via
\begin{equation}
\Psi_{\mathcal{O}}=\{\, |m_0|,|m_1|, |m_k|, \Re(\psi_k) \mid k = 2,\dots, N_k \,\}.
\end{equation}
For a pattern with a $m_1=0$, all nontrivial elements of Flusser's basis vanish because they contain powers of $m_1$. We use the flexible one, which maintains all magnitude information. 

Note that this variant is not fully discriminative. However, among descriptors that retain only one phase descriptor per order, it is optimal in the sense that it preserves all available magnitude information while keeping the descriptor cardinality fixed. In the next part, we will see how full discriminability can be achieved by adding more descriptors.

```latex
\subsection{The Minimal Discriminative Map}
The previous subsection assumed that we know the input pattern. In a general machine learning model, this assumption does not hold. In this subsection, we derive a minimal discriminative set for arbitrary, a-priori unknown input patterns. We will show that there is no basis, because all discriminative descriptor sets are algebraically dependent. Still, there is an optimal map of descriptors, as it is the smallest that can discriminate any two inputs.

We construct the minimal flexible set in the following way. 
Let
\begin{equation}\label{eq:M_fourier}
M=\{m_k\mid k\in\mathbb{Z},\; |k|\le N_k\}
\end{equation}
be the set of Fourier moments of an arbitrary real-valued function
\begin{equation}\label{eq:f_real_fourier}
f:S^1\to\mathbb{R}
\end{equation}
up to a given order \(N_k\in\mathbb{N}\).

The minimal flexible set consists of magnitudes and all pairwise combinations
\begin{equation}\label{eq:B_fourier}
\Phi:=\{\,|m_k|\mid k=0,\dots, N_k\,\}
\;\cup\;
\{\phi(k,j)\mid 1 \le k< j \le N_k\},
\end{equation}
of the form
\begin{equation}\label{eq:phi_fourier_def}
\phi(k,j):=
m_k^{\frac{j}{d_{kj}}}m_{j}^{-\frac{k}{d_{kj}}}
\end{equation}
with 
\begin{equation}
d_{kj}:=\gcd(k,j).
\end{equation}
If $m_{j}=0$, we define $\phi(k,j):=0$.
We need to prove invariance, discriminability, and minimality.

\subsubsection{Invariance}
\begin{theorem}
The minimal discriminative map \(\Phi\) is invariant under rotations.
\end{theorem}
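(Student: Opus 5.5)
The plan is to verify invariance componentwise, using only the transformation law \eqref{rot}. Under a rotation $f_\alpha(\theta)=f(\theta-\alpha)$, this law states $m_k\mapsto e^{ik\alpha}m_k$ for every $k\in\mathbb{Z}$. The map $\Phi$ in \eqref{eq:B_fourier} consists of two kinds of elements, magnitudes $|m_k|$ and pairwise products $\phi(k,j)$, so I will treat them separately.

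\textbf{Magnitudes.} Since $|e^{ik\alpha}|=1$, we have $|m_k(f_\alpha)|=|e^{ik\alpha}|\,|m_k(f)|=|m_k(f)|$ for every $0\le k\le N_k$.

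\textbf{Pairwise products.} Fix $1\le k<j\le N_k$ and set $d_{kj}=\gcd(k,j)$. The exponents $j/d_{kj}$ and $k/d_{kj}$ are then positive integers, so no branch of a complex root is involved. I would first treat the case $m_j\neq 0$, since then $m_j^{-k/d_{kj}}$ is a well-defined integer power. Substituting gives
\begin{equation*}
\phi(k,j)(f_\alpha)=\left(e^{ik\alpha}m_k\right)^{\frac{j}{d_{kj}}}\left(e^{ij\alpha}m_j\right)^{-\frac{k}{d_{kj}}}
=e^{i\alpha\left(\frac{kj}{d_{kj}}-\frac{jk}{d_{kj}}\right)}\,\phi(k,j)(f)=\phi(k,j)(f).
\end{equation*}
This is exactly the phase-cancellation condition of \eqref{eq:invariant}, namely $\sum_j k_j a_j=0$ with $a_k=j/d_{kj}$ and $a_j=-k/d_{kj}$.

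\textbf{Degenerate case.} The only point needing care is the convention $\phi(k,j):=0$ when $m_j=0$. I must check that this case distinction is itself rotation invariant. It is, because $m_j(f_\alpha)=e^{ij\alpha}m_j(f)$ vanishes if and only if $m_j(f)$ does, so both $f$ and $f_\alpha$ fall in the same case and both give the value $0$. The case $m_k=0$ with $m_j\neq0$ needs no special handling, since the formula simply yields $0$ on both sides.

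Combining the magnitude and pairwise cases shows that every component of $\Phi$ is unchanged, which proves the theorem. I expect no real obstacle here. The only subtleties are the integrality of the exponents, which guarantees single-valuedness, and the consistency of the zero convention under rotation. For the same reason, the proof also extends the earlier Bézout invariance argument in Sec.~\ref{bezoutInv}, with the general normalizer $m$ replaced by $m_j$ and the divisor $d$ replaced by $d_{kj}$.
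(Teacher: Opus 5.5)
Your proposal is correct and follows the paper's proof essentially verbatim: the magnitudes are invariant because \(|e^{ik\alpha}|=1\), and in each mixed term \(\phi(k,j)\) the phases \(e^{ikj\alpha/d_{kj}}\) and \(e^{-ijk\alpha/d_{kj}}\) cancel. You also check that the convention \(\phi(k,j):=0\) for \(m_j=0\) is consistent under rotation, since \(m_j(f_\alpha)=0\) if and only if \(m_j(f)=0\); the paper leaves this point implicit, and you handle it correctly.
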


\begin{proof}
We show that every element of \(\Phi\) is invariant under planar rotations $f_\alpha(\theta):=f(\theta-\alpha)$.
First, the magnitudes are invariant because of
\begin{equation}
|m_k(f_\alpha)|\overset{\eqref{rot}}=|e^{ik\alpha}m_k(f)|=|m_k(f)|.
\end{equation}

Then, for any two moments $m_k,m_j\in M$ with $k,j\ge 1$, the mixed terms satisfy
\begin{align}
\phi_\alpha(k,j)
&\overset{\eqref{eq:phi_fourier_def}}=
m_k(f_\alpha)^{j/d_{kj}}\;
m_j(f_\alpha)^{-k/d_{kj}} \\
&\overset{\eqref{rot}}=
\Big(e^{ik\alpha}m_k(f)\Big)^{j/d_{kj}}\;
\Big(e^{ij\alpha}m_j(f)\Big)^{-k/d_{kj}} \\
&=
e^{ikj\alpha/d_{kj}}\;
e^{-ijk\alpha/d_{kj}}\;
m_k(f)^{j/d_{kj}}m_j(f)^{-k/d_{kj}} \\
&\overset{\eqref{eq:phi_fourier_def}}=
\phi(k,j).
\end{align}
Thus each $\phi(k,j)$ is rotation invariant, and consequently the entire map $\Phi$ is invariant under planar rotations.
\end{proof}

$\Phi$ is not invariant to reflections. It follows from~\eqref{refl} that the imaginary part of each mixed term changes sign upon reflection while its real part remains unchanged. Therefore, $\Phi$ can be easily adapted to be invariant to rotations and reflections by replacing each complex descriptor by its magnitude and real part
\begin{equation}
\Phi_O:=\{\,|m_k|\mid k=0,\dots, N_k\,\}
\;\cup\;
\{|\phi(k,j)|, \Re(\phi(k,j)) \mid 1 \le k< j\le N_k\}.
\end{equation}

\subsubsection{Discriminability}
\begin{theorem}
The minimal discriminative map \(\Phi\) is discriminative.
\end{theorem}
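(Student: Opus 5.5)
The plan is to reduce discriminability of $\Phi$ to discriminability of a Bézout map, which the previous theorem already gives. Take two real-valued patterns $f,g$ with $\Phi(f)=\Phi(g)$. Since the magnitudes $|m_k|$ are part of $\Phi$, both patterns have the same support set $S=\{k\ge 1 \mid m_k\neq 0\}$. Hence they also share the same $d=\gcd(S)$ and the same choice of Bézout coefficients $a_k$ from \eqref{eq:bezout}. The coefficients depend only on $S$, so they can be fixed by a deterministic rule applied to $S$. The degenerate case where all moments vanish is handled by the convention that all descriptors are zero; if only $m_0\neq 0$, then $|m_0|$ determines $m_0$, since $m_0$ is real and, for a nonnegative pattern, $m_0\ge 0$.

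The core step is to express every Bézout descriptor $\psi_k=m_k m^{-k/d}$ as a function of the elements of $\Phi$. I would fix a reference index $j_0\in S$, for instance $j_0=\min S$. First I use the pairwise terms $\phi(j_0,j)$, or their inverses/conjugates when $j<j_0$, to recover all relative phases $\theta_j\, j_0/d_{j_0j}-\theta_{j_0}\, j/d_{j_0 j}$. Combined with the known magnitudes, this should determine the moment vector up to a common rotation $e^{ik\alpha}$. For the concrete computation, I would check that $\psi_k^{\,j}$ and similar powers can be written as explicit monomials in the $\phi(k,j)$, $\overline{\phi(k,j)}$ and $|m_l|$, using that both sides are type-$0$ monomials in the $m_l$ with matching exponents. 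Bézout's identity ensures the exponents are integers after multiplying through by the $\phi$'s appropriately.

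Working in the canonical gauge as in the previous proof is the cleanest route. The idea is to rotate $f$ so that $m'(f)\in\mathbb{R}^+$, and then rotate $g$ likewise. Then I must show the invariants force $m_k(f)=m_k(g)$ for all $k$. I would proceed in three steps:
\begin{itemize}
\item[(i)] Each phase $\theta_k$ is determined modulo $2\pi d_{kj}/j$ relative to others through $\phi(k,j)$.
\item[(ii)] Intersecting these constraints over all $j\in S$ pins $\theta_k$ modulo $2\pi\, d/\,\text{lcm}$-type quantities.
\item[(iii)] The residual ambiguity is exactly a rotation by a multiple of $2\pi/d$. Such a rotation preserves $m'$ and hence is a symmetry; it is the same ambiguity that appears in the Bézout inverse problem.
\end{itemize}
Concluding, $g$'s truncated moments equal those of a rotation of $f$, and injectivity of the truncated Fourier transform on the considered class gives $g=f_\alpha$.

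The main obstacle is step (ii): showing that the pairwise phase constraints, each only fixing phases up to roots of unity of order $j/d_{kj}$, jointly remove all ambiguity except the global $\mathbb{Z}_d$ rotation. I would attack it by an induction on $|S|$, adding one index at a time. The induction uses $\gcd(d_{\text{old}},k)=d_{\text{new}}$ and Bézout's identity on the pair to show that the new constraint refines the residual cyclic ambiguity from order $d_{\text{old}}$ to $d_{\text{new}}$. If the direct phase argument becomes messy, the fallback is the monomial argument: write $m'$ and each $\psi_k$ as a product of $\phi$'s with integer exponents, possibly after taking roots that are fixed by the canonical gauge.
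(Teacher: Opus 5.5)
\textbf{Gap: the key step, expressing each $\psi_k$ through $\Phi$, is never supplied, and the mechanism you first propose is false.} Your reduction matches the paper's. Equal magnitudes give the same support $S$, hence the same $d$ and the same B\'ezout coefficients $a_j$; what remains is to express each $\psi_k$ through $\Phi$ and invoke discriminability of the B\'ezout map. That expression is the whole content of the proof.

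Pairs $\phi(j_0,j)$ with a single reference $j_0=\min S$ do not fix the moments up to rotation. Take $S=\{2,3,5\}$ and compare $(m_2,m_3,m_5)=(1,1,1)$ with $(1,-1,1)$, with the same $m_0$ and all other moments zero. The magnitudes agree, and so do $\phi(2,3)=m_2^3m_3^{-2}$ and $\phi(2,5)=m_2^5m_5^{-2}$. Yet no rotation relates the two: $e^{2i\alpha}=1$ forces $\alpha\in\{0,\pi\}$, and $\alpha=0$ fails on $m_3$ while $\alpha=\pi$ fails on $m_5$. Only $\phi(3,5)=m_3^5m_5^{-3}$, which is $1$ versus $-1$, separates them.

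Your fallback names the right target, but its hedges would not work. A power such as $\psi_k^{\,j}$ fixes $\psi_k$ only up to a $j$-th root of unity, which is exactly the ambiguity at stake. The canonical gauge fixes $m'$, not the branch of such roots. Also, $m'$ has type $d$, so it cannot be a function of the $\phi$'s; only $|m'|$ is needed. Steps (ii)--(iii), which you yourself flag as the main obstacle, remain a sketch.

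\textbf{Missing idea: weight the pairwise invariants by the B\'ezout coefficients themselves.} Pair each $k\in S$ with every $j\in S$, using $\phi(k,j)=\phi(j,k)^{-1}$ for $k>j$ and $\phi(k,k)=1$:
\[
\prod_{j\in S}\phi(k,j)^{a_j d_{kj}/d}
=m_k^{\sum_{j\in S}a_j j/d}\prod_{j\in S}m_j^{-a_j k/d}
=m_k\,(m')^{-k/d}.
\]
The exponents are integers because $d\mid d_{kj}$. Since $|m'|=\prod_{j\in S}|m_j|^{a_j}$ is read off the magnitudes, you get $\psi_k=m_k\,(m'/|m'|)^{-k/d}=\bigl(\prod_{j\in S}\phi(k,j)^{a_jd_{kj}/d}\bigr)|m'|^{k/d}$ exactly. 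No roots and no intersection of phase constraints are needed; this is the paper's proof.

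Your induction on $|S|$ can be repaired if at each step you pair the new index $k_n$ with all earlier indices rather than with one reference. Let $g=\gcd(d_{\mathrm{old}},k_n)$ and $m'_{\mathrm{old}}=\prod_a m_{k_a}^{b_a}$ with $\sum_a b_ak_a=d_{\mathrm{old}}$. The pair-B\'ezout step needs the invariant $(m'_{\mathrm{old}})^{k_n/g}m_{k_n}^{-d_{\mathrm{old}}/g}$. This is not in $\Phi$, but it equals $\prod_a\phi(k_a,k_n)^{b_a d_{k_ak_n}/g}$, which is the same trick in recursive form.

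Your caveat that $|m_0|$ recovers $m_0$ only under a sign assumption such as $f\ge 0$ is correct. It is needed beyond the degenerate case, and the paper's proof passes over $\psi_0=m_0$ without comment.
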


\begin{proof}
We show that any B\'ezout invariant $\psi_k$ can be constructed from the flexible set $\Phi$. Then, discriminability of $\Phi$ follows from the discriminability of the B\'ezout map.

First, we derive all $\phi(k,j)$ with $k>j$, which are not part of $\Phi$, from
\begin{equation}
\phi(k,j)\overset{\eqref{eq:phi_fourier_def}}=\phi(j,k)^{-1}
\end{equation}
if $\phi(j,k)\neq0$ and zero otherwise.

Then since all magnitudes $|m_k|$ are part of $\Phi$, we know which moments are non-zero and part of the support \(S\)~\eqref{eq:S_def}, so we know the type \(d\)~\eqref{eq:g_def} and B\'ezout coefficients $a_j$~\eqref{eq:bezout}.
For \(k\in S\), we obtain
\begin{align}
\prod_{j\in S}
\phi(k,j)^{a_j d_{kj}/d}
&\overset{\eqref{eq:phi_fourier_def}}=
\prod_{j\in S}
\left(
m_k^{j/d_{kj}}m_j^{-k/d_{kj}}
\right)^{a_j d_{kj}/d} \\
&=
m_k^{\sum_{j\in S} a_j j/d}
\prod_{j\in S} m_j^{-a_j k/d} \\
&\overset{\eqref{eq:bezout}}=
m_k\,\prod_{j\in S} m_j^{-a_j k/d}\\
&\overset{\eqref{eq:mprime}}=
m_k\,(m')^{-k/d}.
\end{align}
Since all magnitudes $|m_j|$ are part of $\Phi$, we can derive
\begin{equation}
|m'|
\overset{\eqref{eq:mprime}}=
\left|\prod_{j\in S} m_j^{a_j}\right|
=
\prod_{j\in S}|m_j|^{a_j}.
\end{equation}
Together, we obtain
\begin{equation}
\psi_k
\overset{\eqref{eq:psi_simplified}}=
m_k m^{-k/g}
\overset{\eqref{eq:m_norm}}=
m_k \left(\frac{m'}{|m'|}\right)^{-k/g}
=
\left(
\prod_{j\in S}
\phi(k,j)^{a_j d_{kj}/d}
\right)
|m'|^{k/d}.
\end{equation}
Thus every B\'ezout invariant $\psi_k$ can be constructed from $\Phi$. For $k\notin S$, both $m_k$ and $\psi_k$ vanish. Hence $\Phi$ is discriminative.
\end{proof}

\subsubsection{Minimality}
\begin{theorem}
The minimal discriminative map \(\Phi\) is minimal.
\end{theorem}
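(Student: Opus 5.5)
We must show that no proper subcollection $\Psi\subsetneq\Phi$ is discriminative. By the definition of minimality, it suffices to exhibit, for each single element $\phi\in\Phi$, two patterns that are not related by a rotation but on which all elements of $\Phi\setminus\{\phi\}$ agree. Any proper subcollection is contained in some $\Phi\setminus\{\phi\}$, so it cannot be discriminative either. We work directly with truncated moment vectors $(m_0,\dots,m_{N_k})$, where $m_{-k}=\overline{m_k}$, since the discriminability proof above already reduces the problem to these vectors. Any such vector is realized by a real trigonometric polynomial $f$, so we may choose moment vectors freely. We then split into the two families of descriptors.

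\textbf{Removing a magnitude $|m_k|$.} For $k=0$, take $f$ and $f+c$ with $c$ a constant, chosen so that $m_0$ changes from $a$ to $-a$ while all other moments stay fixed. All remaining descriptors agree, because no $\phi(k,j)$ involves $m_0$ (the index constraint is $k,j\ge1$). The two patterns are not rotation-related, since $m_0$ is rotation invariant. For $k\ge1$, take two patterns in which only $m_k$ is nonzero, with $|m_k|=1$ and $|m_k|=2$ respectively. Every $\phi(k',j)$ with $k'\neq j$ then vanishes for both, either by the convention that $\phi(k,j)=0$ when $m_j=0$ or because the factor $m_{k'}^{j/d}$ is zero. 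Hence all remaining descriptors coincide, yet the patterns are not rotation-related.

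\textbf{Removing a pair term $\phi(k,j)$ with $1\le k<j$.} This is the main step. Take patterns supported only on $m_k$ and $m_j$, with $|m_k|=|m_j|=1$. The other pair terms vanish, because they involve some other, zero, moment: either as the factor $m_{j'}$ in the denominator (set to zero by convention) or as the factor $m_{k'}$ raised to a positive power. The magnitudes agree. The only remaining invariant of the pair is the relative phase
\[
\phi(k,j)=e^{\ii(j\theta_k-k\theta_j)/d_{kj}}.
\]
Up to rotation, the orbit of $(\theta_k,\theta_j)$ is classified exactly by the value $(j\theta_k-k\theta_j)/d_{kj} \bmod 2\pi$. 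To see this, rotate so that $\theta_k\in[0,2\pi/k)$ is fixed; the residual stabilizer then consists of rotations by multiples of $2\pi/k$. So two choices differing in this value, for instance $\theta_j=0$ versus $\theta_j=\pi d_{kj}/k$ with $\theta_k=0$, give non-equivalent patterns that agree on $\Phi\setminus\{\phi(k,j)\}$.

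The obstacle is to verify carefully that the convention $\phi(k,j)=0$ when $m_j=0$, together with the case $m_k=0$, indeed makes every other mixed term vanish (or at least agree) on these sparse patterns. We also need a clean check that the two chosen phase configurations are not rotation-related. That reduces to the orbit computation above: a rotation by $\alpha$ multiplies $\phi(k,j)$ by $1$, so different values of $\phi(k,j)$ certify different orbits. Because $\phi(k,j)$ is itself invariant, this gives non-equivalence immediately. The proof therefore reduces to checking that the remaining descriptors agree, which the sparsity argument provides.
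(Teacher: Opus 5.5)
Your overall strategy is the paper's. You remove one descriptor at a time and use sparse moment vectors: $|m_k|=1$ versus $|m_k|=2$ for magnitudes, and a two-moment configuration with relative phase $-1$ for $\phi(k,j)$.

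The $k=0$ step, however, does not work. Replacing $m_0=a$ by $m_0=-a$ leaves $|m_0|$ unchanged, so your two patterns agree on \emph{all} of $\Phi$, not only on $\Phi\setminus\{|m_0|\}$. The removed descriptor plays no role in your witness. What the pair actually shows is that $\Phi$ itself is not discriminative, and minimality as defined in the paper presupposes discriminability. Taken at face value, then, the step refutes the theorem rather than proving the necessity of $|m_0|$.

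A witness for $|m_0|$ must differ in $|m_0|$. Take $m_0=1$ versus $m_0=2$ with all other moments zero, which is what the paper does by treating $k=0$ together with $k\ge 1$. Every other magnitude and every $\phi(p,q)$ then vanishes on both patterns, and only $|m_0|$ separates them.

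Your sign-flip pair is still worth keeping in mind. For real $f$ of indefinite sign, $\Phi$ cannot recover the real invariant $m_0=\psi_0$, because no $\phi(p,q)$ involves $m_0$. The paper's discriminability argument therefore tacitly requires either $m_0\ge 0$ (e.g.\ nonnegative patterns) or $m_0$ in place of $|m_0|$. Under the latter convention your pair would become a legitimate witness.

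For the mixed terms your argument is correct and slightly cleaner than the paper's. The paper puts the phase on $m_k$, taking $m_k=e^{i\pi d_{kj}/j}$ and $m_j=1$. It then rules out a rotation by reducing to the impossible congruence $2ar\equiv 1 \pmod{2b}$. You put the phase on $m_j$ and simply note that $\phi(k,j)$ is rotation invariant and equals $1$ and $-1$ on the two patterns, which certifies distinct orbits at once. Your stronger claim, that this value classifies orbits completely, is true but not needed.
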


\begin{proof}
We will show that removing any element causes two sets of moments that differ by something other than a rotation to be indistinguishable. 

Removing a magnitude $|m_k|$ leads to the moment sets $m^{(1)}_{k}=1$, others zero becoming indistinguishable from $m^{(2)}_{k}=2$, others zero because except for $|m_k|$, all elements of $\Phi$ are zero for both.

Fix an admissible pair $(k,j)$ and consider $\Phi' := \Phi\setminus\{\phi(k,j)\}$.
We exhibit the two \emph{different} moment sets. For set 1
\begin{equation}
m^{(1)}_{k}=1,\quad m^{(1)}_{j}=1,
\end{equation}
others zero, and for set 2
\begin{equation}
m^{(2)}_{k}=e^{i\pi d_{kj}/j},\quad m^{(2)}_{j}=1,
\end{equation}
others zero.

Clearly, all magnitudes agree. 
For any remaining descriptor $\phi(p,q)\in\Phi'$, if $\{p,q\}\not\subseteq\{k,j\}$, then at least one involved moment is zero in both moment sets, hence both descriptors are zero. 
Thus all descriptors in $\Phi'$ coincide, while $\phi(k,j)$ separates the two moment sets.

We show by contradiction that the moments are not related by rotation $e^{i\alpha}$. Assume a rotation exists. Then for some \(\alpha\),
\begin{equation}
m_k^{(2)}=e^{ik\alpha}m_k^{(1)},
\qquad
m_j^{(2)}=e^{ij\alpha}m_j^{(1)}.
\end{equation}

From the \(j\)-moment,
\begin{equation}
1=e^{ij\alpha},
\end{equation}
hence
\begin{equation}
j\alpha=2\pi r,
\qquad r\in\mathbb{Z},
\end{equation}
and therefore
\begin{equation}
\alpha=\frac{2\pi r}{j}.
\end{equation}

Using the \(k\)-moment, we require
\begin{equation}
e^{i\pi d_{kj}/j}
=
e^{ik\alpha}.
\end{equation}
Substituting \(\alpha\) gives
\begin{equation}
e^{i\pi d_{kj}/j}
=
e^{2\pi i k r/j}.
\end{equation}

Equality of exponentials implies
\begin{equation}
\frac{\pi d_{kj}}{j}
=
\frac{2\pi kr}{j}
+
2\pi q,
\qquad q\in\mathbb{Z}.
\end{equation}
Dividing by \(\pi\) and multiplying by \(j\) yields
\begin{equation}
d_{kj}
=
2kr+2qj,
\end{equation}
i.e.,
\begin{equation}
2kr\equiv d_{kj}\pmod{2j}.
\end{equation}

Write
\begin{equation}
k=ad_{kj},
\qquad
j=bd_{kj},
\qquad
\gcd(a,b)=1.
\end{equation}
Then
\begin{equation}
2ag_{kj}r\equiv d_{kj}\pmod{2bd_{kj}}.
\end{equation}
Dividing by \(d_{kj}\) gives
\begin{equation}
2ar\equiv 1\pmod{2b}.
\end{equation}
This is impossible because the left-hand side is even while the right-hand side is odd. Hence no such rotation exists.

Thus every descriptor is necessary, and \(\Phi\) is minimally discriminative.
\end{proof}

\subsubsection{Dependence}
The set \(\Phi\) is minimal with respect to discriminability, but not independent. Indeed,
\begin{equation}
|\phi(k,j)|^2
=
\frac{|m_k|^{2j/d_{kj}}}{|m_j|^{2k/d_{kj}}}.
\end{equation}

\section{Equivariant Structure}
\label{appendix:equivariant}
Recent work on equivariant neural networks has emphasized the importance of preserving relative orientation information across features~\citep{weiler2023, romero2020}. In particular, Romero et al.\ argue that fully invariant local representations may discard geometric relationships required for object composition~\citep{romero2020}. Fig.~\ref{fig:equivariance_motivation} illustrates this phenomenon: although the local facial components are identical, disrupting their relative orientations and spatial arrangement destroys the perception of a coherent face.

\begin{figure}[h]
    \centering
    \includegraphics[width=0.95\linewidth]{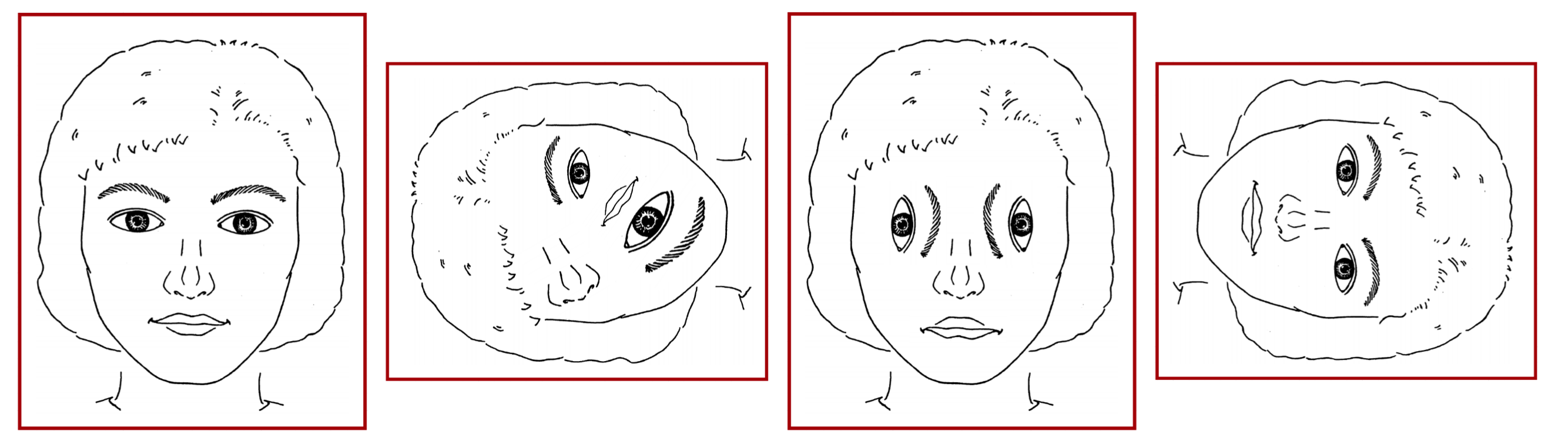}
    \caption{Relative pose relationships are essential for object composition.
All panels contain the same local facial components, but only the outer panels
preserve a coherent global face configuration. The inner panels disrupt the
relative positions, orientations, and scales of the parts, illustrating that
independent invariance of local elements can discard information needed to
recognize the object as a face. Taken from Romero et al.~\citep[Fig.~1]{romero2020},
based on Schwarzer~\citep{schwarzer2000}.}
    \label{fig:equivariance_motivation}
\end{figure}

Motivated by these observations, we investigated whether explicitly retaining equivariant information alongside invariant features improves performance in our setting. In addition to the invariant basis introduced in Eq.~\eqref{eq:our-basis}, we considered variants that preserve a subset of equivariant moment information, such as:
(i) retaining the reference moment $m_1$, which encodes local orientation information, (ii) projecting moment features jointly into type-$0$ (invariant) and type-$1$ (equivariant) representations.\footnote{Using e2cnn layers for dealing with type-1 features.}

Empirically, we did not observe consistent improvements from including explicit equivariant branches. Across the evaluated benchmarks, models based solely on invariant features achieved comparable or better performance than architectures augmented with equivariant components. This behavior is consistent with the quantitative results in the main text, where the proposed invariant models perform competitively with fully equivariant baselines despite operating entirely in the invariant feature space.

One possible explanation is that, although each local feature is individually invariant, relative orientation information can still be inferred implicitly from spatial arrangements across the continuous feature maps. Thus, global geometric structure may remain accessible without explicitly propagating equivariant feature fields through the network.

To further examine this question, we evaluated a variant of the architecture that augments the invariant branch with an additional type-$1$ equivariant branch. Results on RotMNIST are summarized in Table~\ref{tab:ablation_equivariance}. The proposed invariant architecture achieves the highest accuracy while maintaining essentially the same parameter count. Adding an explicit equivariant branch did not improve performance and produced results comparable to the equivariant baseline.

\begin{table} 
    \centering
    \caption{Comparison between invariant and equivariant variants on RotMNIST. Adding an explicit equivariant branch does not improve performance over the invariant architecture.}
    \label{tab:ablation_equivariance}
      \begin{tabular}{lcc}
    \toprule
    Architecture & Accuracy & \# Params \\
    \midrule
    E2CNN              & $98.82 \pm .01$            & 362 K \\
    Mom/Inv \acp{CNN}   & $\mathbf{99.00 \pm .03}$ & 362 K \\
    Equivariant Branch & $98.82 \pm .06$            & 363 K \\
    \bottomrule
  \end{tabular}
\end{table}

\section{Radial Basis Design}
\label{appendix:radial}

The proposed moment layer depends on the choice of radial basis functions \(R_{k,l}(r)\) used in the separable kernel construction of Eq.~\eqref{def:polynomial}. We therefore investigate the sensitivity of the architecture to different radial discretizations and parameterizations.

We first study the effect of the number of radial rings used to discretize the radial component. Results are reported in Table~\ref{tab:ring_count}. Performance remains stable across a broad range of ring counts, with a slight optimum observed around four rings. Next, we compare three radial basis parameterizations: B-splines, monomials used in classical pattern recognition~\citep{flusser2009}, and orthogonal Legendre polynomials. The results are summarized in Table~\ref{tab:kernel_variants}.

\begin{table}
  \caption{Comparison of different radial basis parameterizations. Performance remains consistent across variants and simplified radial constructions. }
  \label{tab:kernel_variants}
  \centering
  \small
  \begin{tabular}{lccc}
  \toprule
  Variant 
    & Test Accuracy 
    & Val Accuracy 
    \\
  \midrule
  Monomials
    & $98.82$ 
    & $99.25$ 
    \\

  Legendre polynomials 
    & $98.80$ 
    & 98.97 
    \\

  B-splines 
    & $\mathbf{99.07}$ 
    & 99.74 
    \\
    
  \bottomrule
\end{tabular}
\end{table}

\todo[]{Nice to have: This can be graph}
\begin{table}
  \caption{Effect of the number of radial rings. Performance is stable across a range of values, with a slight optimum at 4 rings.}
  \label{tab:ring_count}
  \centering
  \begin{tabular}{lccc}
  \toprule
  Ring Count 
    & Test Accuracy 
    & Train Accuracy 
    & Val Accuracy \\
  \midrule
  4  
    & $\mathbf{99.15}$ 
    & 99.93 
    & 99.87 \\

  7  
    & $99.15$ 
    & 99.98 
    & $\mathbf{99.89}$ \\

  3  
    & $99.13$ 
    & 99.89 
    & 99.80 \\

  6  
    & $99.13$ 
    & 99.96 
    & 99.89 \\

  5  
    & $99.09$ 
    & 99.95 
    & 99.83 \\

  8  
    & $99.02$ 
    & 99.77 
    & 99.22 \\
  \bottomrule
\end{tabular}
\end{table}
\todo[]{Nice to have: Pull the fixed basis results. }

\section{Filter Size Trade-off}
We further analyze the influence of filter size on rotational generalization. Results are shown in Fig.~\ref{fig:filter-size-tradeoff}, where we evaluate validation accuracy on both upright and rotated inputs without data augmentation.

As the filter size increases, performance on upright numerals remains consistently high, while accuracy on rotated samples improves substantially. This suggests that larger receptive fields better capture rotational structure by integrating information over a wider spatial neighborhood. In contrast, smaller filters are more sensitive to local orientation changes and therefore generalize less effectively to unseen rotations.

These results highlight a trade-off between local precision and rotational robustness. While very small filters may better preserve fine-grained local structure, larger filters provide improved stability under rotations with only minimal impact on standard upright performance.

\begin{figure} 
    \centering
    \includegraphics[width=0.6\linewidth]{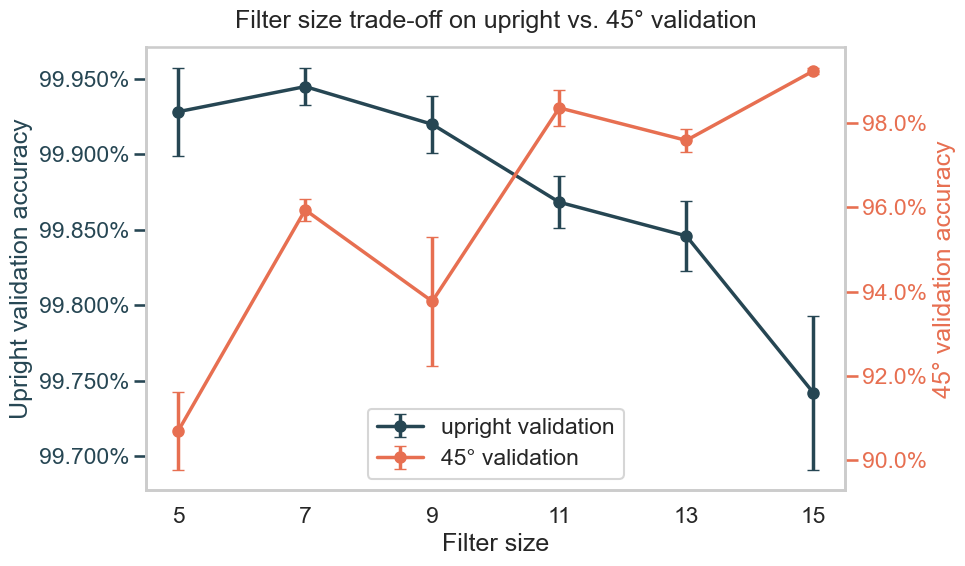}
    \caption{
        Effect of filter size on validation performance without data augmentation.
        The left y-axis shows validation accuracy on upright digits, while the right y-axis shows validation accuracy on digits rotated by $45^\circ$.
        Larger filters preserve very high upright accuracy while substantially improving performance on rotated inputs, illustrating the trade-off between local precision and rotational robustness.
        Error bars indicate one standard deviation across runs.
    }
    \label{fig:filter-size-tradeoff}
\end{figure}

\section{NFFA-Europe}
\label{appendix:nffa}
\begin{figure}
    \centering
    \includegraphics[width=0.80\linewidth]{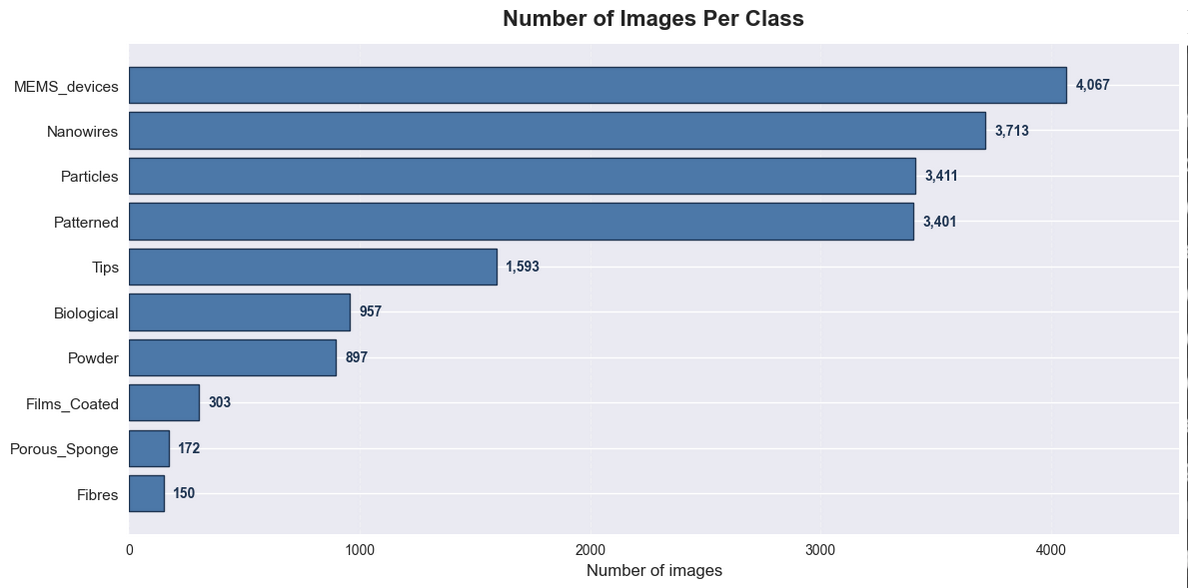}
    \caption{Distribution of SEM images across classes in the NFFA-Europe dataset. The dataset exhibits significant class imbalance, with some categories containing substantially fewer samples than others. This makes the benchmark particularly relevant for evaluating symmetry-aware architectures in low-data regimes.}
    \label{fig:nffa_counts}
\end{figure}
\todo[]{Nice to have some examples}
To evaluate the proposed invariant architecture in a realistic scientific-image setting, we use the NFFA-Europe SEM dataset~\citep{nffa_europe}, ``a large-scale collection of scanning electron microscopy (SEM) images acquired at CNR-IOM (Trieste, Italy) within the NFFA-Europe project. The dataset contains approximately 25k grayscale SEM images organized into 10 semantic categories, including biological samples, nanowires, MEMS structures, patterned materials, and other nanoscale structures. Classification labels were validated by groups of nanoscientists through the online annotation platform \url{http://sem-classifier.nffa.eu}, and only images with strong annotation agreement were retained in the final dataset.''~\citep{nffa_europe}

 The dataset therefore provides a challenging benchmark for representation learning with a limited number of samples and substantial intra-class variability. Images are resized and center-cropped to a fixed spatial resolution before training. The dataset is particularly well-suited for studying rotational invariance. Object orientation in SEM imaging is typically arbitrary and does not carry semantic meaning, while many classes exhibit strong rotational or texture-like symmetries. At the same time, the dataset remains relatively small compared to standard natural-image benchmarks and exhibits substantial class imbalance, making it a suitable testbed for evaluating sample efficiency and inductive biases.

Fig.~\ref{fig:nffa_counts} shows the distribution of images across categories. While some classes contain several thousand samples, others are comparatively scarce, highlighting the importance of architectures that can generalize effectively in low-data regimes.

\section{Additional Quantitive Study}
\label{appendix:quantitative}
In addition to $E(2)$-CNNs, we compare against several widely used equivariant, non-equivariant architectures, including Harmonic Networks (H-Nets)~\citep{worrall2017}, ResNets~\citep{he2016}, and Vision Transformers~\citep{dosovitskiy2021}. These baselines were selected because they represent complementary approaches to incorporating inductive biases while remaining the closest standard architectures to our setting. To ensure a fair comparison, all models were evaluated in compact, low-parameter regimes comparable to our architecture size. For H-Nets, increasing model scale did not yield measurable accuracy improvements, and we therefore report results using a smaller configuration. In addition, scaling to larger benchmarks was computationally challenging, with runtimes often exceeding those of our method by a factor of $5$--$20$; consequently, we omit these experiments from the main comparison table. Overall, the comparisons highlight that our invariant moment architecture achieves strong performance while maintaining substantially lower parameter counts than equivariant and standard non-equivariant alternatives.
\begin{table}[h]
    \centering
    \caption{\textbf{Quantitative comparison of our \ac{CNN}} to others. Entries are given in the format \mbox{$\mathrm{Acc.} \pm \mathrm{\scriptstyle{Std.Dev.}} \scriptstyle{\mathrm{(Param.Count)}}$} } 
    \label{tab:comparison}
    \begin{tabular}{l|c|c|c|c|c}
\toprule
Dataset 
& Ours 
& E2-CNNs
& ResNet 
& H-Nets
& ViT \\
\midrule

MNIST\tiny{RotTest} 
& $\mathbf{99.42}{\scriptstyle \pm.07}$ {\tiny (119k)} 
& $97.20{\scriptstyle \pm.14}$ {\tiny (362k)} 
& $46.28{\scriptstyle \pm .26}$ {\tiny (373k)} 
& $99.05{\scriptstyle \pm .03}$ {\tiny (33k)} 
& $42.08{\scriptstyle \pm .64}$ {\tiny (853k)} \\

{\tiny rot}MNIST 
& $\mathbf{99.08}{\scriptstyle \pm .04}$ {\tiny (119k)} 
& $98.82{\scriptstyle \pm .01}$ {\tiny (362k)} 
& $98.55{\scriptstyle \pm .01}$ {\tiny (373k)} 
& $98.74 {\scriptstyle \pm .03}$ {\tiny (33k)} 
& $88.72{\scriptstyle \pm .99}$ {\tiny (853k)}\\

RESISC45 
& $\mathbf{85.77}{\scriptstyle \pm.47}$ {\tiny (230k)} 
& $84.02{\scriptstyle \pm.27}$ {\tiny (365k)} 
& $84.32 {\scriptstyle \pm.11}$ {\tiny (377k)}
& $53.06{\scriptstyle \pm .02}$ {\tiny (37k)}
& $63.18{\scriptstyle \pm .55}$ {\tiny (928k)} \\

EuroSAT\tiny{rgb}
& $\mathbf{95.29}{\scriptstyle \pm .40}$ {\tiny (149k)} 
& $93.90{\scriptstyle \pm .34}$ {\tiny (363k)} 
& $79.50{\scriptstyle \pm .31}$ {\tiny (377k)} 
& $86.28{\scriptstyle \pm 1.13}$ {\tiny (45k)}
& $76.74 {\scriptstyle \pm.46}$ {\tiny (924k)}\\

DTD$^\ast$  
& $67.14{\scriptstyle \pm.27}$ {\tiny (336k)}
& $67.18{\scriptstyle \pm .81}$ {\tiny (469k)} 
& $56.66{\scriptstyle \pm .23}$ {\tiny (377k)} 
& $38.24{\scriptstyle \pm .87}$ {\tiny (36k)}
& $37.95{\scriptstyle \pm .45}$ {\tiny (928k)} \\

NFFA
& $87.02 {\scriptstyle \pm.87}$ {\tiny (227k)}
& $87.32 {\scriptstyle \pm.30}$ {\tiny (363k)}  
& $85.08{\scriptstyle \pm .79}$ {\tiny (11M)} 
& --
& $74.20{\scriptstyle \pm .63}$ {\tiny (875k)} \\

Galaxy Zoo 
& $83.30{\scriptstyle \pm.03}$ {\tiny (227k)} 
& $\mathbf{84.39}{\scriptstyle \pm.08}$ {\tiny (363k)} 
& $82.01{\scriptstyle \pm .07}$ {\tiny (373k)}
& --
& $77.86{\scriptstyle \pm .12}$ {\tiny (924k)}\\

Col. Hist. 
& $\textbf{92.30}{\scriptstyle \pm.40}$ {\tiny (228k)} 
& $91.58{\scriptstyle \pm .41}$ {\tiny (363k)} 
& $91.36{\scriptstyle \pm .58}$ {\tiny (377k)} 
& $88.39{\scriptstyle \pm .20}$ {\tiny (90k)}
& $85.83{\scriptstyle \pm .08}$ {\tiny (915k)} \\

\bottomrule
\end{tabular}
\end{table}
\section{Minimal Implementation}
\label{appendix:implementation}
We provide a minimal implementation of the invariant feature construction used in our experiments to facilitate reproducibility. The code illustrates how invariant features are computed from moments, abstracted from framework-specific details. For the moment projection, one can directly use existing implementations of complex moments or steerable convolutions (e.g., \texttt{escnn}~\cite{cesa2022, e2cnn}). 
\newpage
\begin{lstlisting}[language=Python]
@torch.compile
def construction_of_invariants(nt: torch.Tensor,
                               nt_types: Torch.Tensor,
                               eps: float = 1e-7):
    """
    nt: Non-trivial moments (type>=1) 
        of shape [Ba, Ch, #Types, 2 (Real/Imag), H, W]
    nt_types: Types of nt [#Types]

    The first of nt is expected to be the reference type-1 [:, :, 0, ...]
    """
    mag = torch.linalg.vector_norm(nt, dim=-3)
    safe_mag = mag.clamp(min=eps)
    
    real, imag = nt.unbind(dim=-3)
    # NOTE: Torch atan2 is not gradient safe. 
    angle = atan2(imag / safe_mag, real / safe_mag, eps)

    n_angle = angle[:, :, 0:1] * -nt_types[:, None, None]
    n_mag = mag[:, :, 0:1]
    nt_mag = mag[:, :, 1:]
    nt_angle = angle[:, :, 1:]
    
    scaled_mag = magnitude_fn(n_mag, nt_mag)
    angular_inv = scaled_mag * torch.cos(nt_angle + n_angle)
    return torch.cat([mag, angular_inv],dim=2)


    
\end{lstlisting}

\section{Dimension of Quotient}
\label{appendix:dimensions}
Flusser's construction Eq.~\eqref{eq:flusser_basis} of invariant features assumes $m_1 \neq 0$, which removes rotational ambiguity. Under this condition, the group action is free on the corresponding subset of the moment space. As a consequence, the invariant representation can be interpreted as a parametrization of the quotient space whose dimension is given by $\dim(\mathcal{M}) - \dim(G)$ according to the Quotient Manifold Theorem. \todo[]{We should cite this here: \citep{dym2024}}

\textbf{Theorem (Quotient Manifold Theorem).}
Suppose $G$ is a Lie group acting smoothly, freely, and properly on a smooth manifold $M$. Then the orbit space $M / G$ is a topological manifold of dimension equal to $\dim M - \dim G$, and has a unique smooth structure with the property that the quotient map $\pi : M \to M / G$ is a smooth submersion.~\cite{lee2003}

\section{Experimental Settings}
Dataset-specific training hyperparameters, including resolutions, batch sizes, learning rates, and preprocessing steps, are summarized in Table~\ref{tab:dataset-hparams}. Shared architectural hyperparameters for both our Mom/Inv models and ESCNN baselines are provided in Table~\ref{tab:model-defaults}.
\begin{table*}[h]
\centering
\small

\begin{tabular}{lcc}
\toprule
Parameter & Optimal & ESCNN \\
\midrule
Stem kernel size & 15 & 7 \\
Stage kernels & $[11,11,11]$ & $[5,5,5]$ \\
Stage depths / blocks & $[1,2,2]$ & $[1,2,2]$ \\
Classifier hidden size & 64 & 64 \\
Dropout & 0.2 & -- \\
Maximum type & 3 (4 for DTD) & 3 (4 for DTD) \\
Block type & Mom/Inv Block & GatedBlock \\
Pooling & AveragePool & PointwiseMaxPool \\
\bottomrule
\end{tabular}
\small
\caption{Shared architecture hyperparameters.}
\label{tab:model-defaults}

\end{table*}

\begin{table}[t]
\centering

\small
\caption{Dataset-specific training hyperparameters.}
\label{tab:dataset-hparams}

\setlength{\tabcolsep}{5pt}
\renewcommand{\arraystretch}{1.1}

\begin{tabular}{lccccc}
\toprule
Dataset & Resolution & Batch  & Optimal LR & ESCNN LR & Preprocess \\
\midrule
RotMNIST
& $64^2$
& 16 / 32
& $1.525\mathrm{e}{-3}$
& $10^{-3}$
& Upscaled to 64 \\

MNIST-rot-test
& $64^2$
& 64
& $1.5\mathrm{e}{-3}$
& $10^{-3}$
& -- \\

DTD
& $96^2$
& 16
& $1.5\mathrm{e}{-3}$
& $1.5\mathrm{e}{-3}$
& Circular Mask\\

EuroSAT
& $64^2$
& 32
& $1.5\mathrm{e}{-3}$
& $10^{-3}$
&  -- \\

Galaxy Zoo
& $96^2$
& 64
& $1.5\mathrm{e}{-3}$
& $1.5\mathrm{e}{-3}$
& Circular Mask \\

RESISC45
& $96^2$
& 8
& $1.5\mathrm{e}{-3}$
& $1.5\mathrm{e}{-3}$
& --\\

NFFA
& $224^2$
& 64
& $10^{-3}$
& $1.525\mathrm{e}{-3}$
& Grayscale, Aug. Flips \\
\bottomrule
\end{tabular}

\end{table}

\section{Computational Resources}
All experiments were conducted using a single GPU due to the compact size of the proposed models. Training was performed on NVIDIA A100 GPUs on a shared compute cluster, as well as NVIDIA RTX 5090 GPUs for local experimentation and development. All models were implemented in PyTorch.



\end{document}